\def\BibTeX{{\rm B\kern-.05em{\sc i\kern-.025em b}\kern-.08em
    T\kern-.1667em\lower.7ex\hbox{E}\kern-.125emX}}
\newif\ifuseboldmathops
\newif\ifuseittextabbrevs
	\newcommand{\etal}{{et~al.}}
	\newcommand{\etal}{et~al.}
	\newcommand{\reals}{\mathbf{R}}
	\newcommand{\reals}{\mathbb{R}}
\newcommand{\norm}[1]{\lVert#1\rVert}
\renewcommand{\vec}[1]{\mathbf{#1}}
\acrodef{mdp}[MDP]{Markov Decision Process}
\acrodef{pomdp}[POMDP]{Partially Observable Markov Decision Process}
\theoremstyle{definition}
\newtheorem{proposition}{Proposition}
\newtheorem{remark}{Remark}
\acrodef{smdp}[Semi-MDP]{Semi-Markov decision process}
\acrodef{rl}[RL]{reinforcement learning}
\acrodef{mcts}[MCTS]{Monte Carlo tree search}
\acrodef{uct}[UCT]{Upper Confidence Bound 1 applied to trees}
\acrodef{scltl}[scLTL]{syntactically co-safe LTL}
\acrodef{ssp}[SSP]{Stochastic Shortest Path}
\acrodef{p2sg}[SG(2)]{Two-player Stochastic Game}
\acrodef{dof}[DOF]{degree of freedom}
\acrodef{cpg}[CPG]{Central Pattern Generator}
\acrodef{nn}[NN]{Neural Network}
\acrodef{snn}[SNN]{Spiking Neural Net}
\acrodef{rstdp}[R-STDP]{Reward-Modulated Spike-Timing-Dependent Plasticity}
\acrodef{gp}[GP]{Genetic Programming}
\acrodef{ppoc}[PPOC]{Proximal Policy Optimization Option-Critics}
\acrodef{dr}[DR]{Domain Randomization}
\acrodef{bibo}[BIBO]{Bounded-input, Bounded-Output}
\definecolor{ultramarine}{rgb}{0.07, 0.04, 0.56}
\title{\LARGE \bf  Integrating Contact-aware Feedback CPG System for Learning-based Soft Snake Robot Locomotion Controllers }
\author{Xuan Liu\textsuperscript{1}, Cagdas D. Onal\textsuperscript{1}, and Jie Fu\textsuperscript{2,*}
\thanks{\textsuperscript{1} Xuan Liu and Cagdas Onal are with the Robotics Engineering Department at Worcester Polytechnic Institute, Worcester, MA, US. \textsuperscript{2} Jie Fu is with the Department of Electrical and Computer Engineering, University of Florida, Gainesville, FL, US.
\textsuperscript{*} The corresponding author.} 
\thanks{\tt\small xliu9, cdonal, @wpi.edu, fujie@ufl.edu}%
\thanks{This work was supported in part by the National Science Foundation under grant \#1728412.}%
}
\begin{document}

\maketitle
\thispagestyle{empty}
\pagestyle{empty}

\begin{abstract}

This paper aims to solve the contact-aware locomotion problem of a soft snake robot by developing bio-inspired contact-aware locomotion controllers. To provide effective contact information for the controllers, we develop a scale-covered sensor structure mimicking natural snakes' \textit{scale sensilla}. In the design of the control framework, our core contribution is the development of a novel sensory feedback mechanism for the Matsuoka central pattern generator (CPG) network. This mechanism allows the Matsuoka CPG system to work like a ``spinal cord'' in the whole contact-aware control scheme, which simultaneously takes the stimuli including tonic input signals from the ``brain'' (a goal-tracking locomotion controller) and sensory feedback signals from the ``reflex arc'' (the contact reactive controller), and generates rhythmic signals to actuate the soft snake robot to slither through densely allocated obstacles. In the ``reflex arc'' design, we develop two distinctive types of reactive controllers -- 1) a reinforcement learning (RL) sensor regulator that learns to manipulate the sensory feedback inputs of the CPG system, and 2) a local reflexive sensor-CPG network that directly connects sensor readings and the CPG's feedback inputs in a specific topology. Combining with the locomotion controller and the Matsuoka CPG system, these two reactive controllers facilitate two different contact-aware locomotion control schemes. The two control schemes are tested and evaluated in both simulated and real soft snake robots, showing promising performance in the contact-aware locomotion tasks. The experimental results also validate the advantages of the modified Matsuoka CPG system with a new sensory feedback mechanism for bio-inspired robot controller design.

\end{abstract}

\section{Introduction}
Soft continuum robots have unique advantages in traversing through cluttered and confined environments, due to their flexible body structure and deformable materials. Applications of soft continuum robots in contact-aware environments include search-and-rescue \cite{hawkes2017soft}, pipe inspection\cite{majidi2014soft}, and medical surgery \cite{wang2017cable}. In particular, soft robotic snakes have the unique potential that any part of their body, if properly controlled, could adapt to and reduce the impact from collisions, or even benefit from the propulsion force generated by the contacts with obstacles. In this paper, we investigate the following two questions: 
\begin{itemize}
    \item How to design the contact-aware control method for the soft snake robot that can intelligently adapt to and employ the contact force from densely distributed obstacles during locomotion?
    \item How to design the sensory and body structure on a snake-like soft robot to make the tactile perception more sensitive while avoiding jamming, and making the contact-aware locomotion more energy efficient? 
\end{itemize}

In literature, several research groups have studied this unique topic in snake robot locomotion. The solutions to the contact-aware locomotion \cite{7838565, Kano2017, 2023classicfeedback} are mainly studied and implemented on rigid snake robots and most of the control methods are model-based. Transeth \etal \cite{Transeth2007, transeth2008snake} first defined this property as the \textit{obstacle-aided locomotion},  wherein the snake robot actively employs external objects to generate propulsion forces during the locomotion. Their pioneer work proposed a two-module framework of obstacle-aided locomotion that includes: (a) a path planner that searches for a trajectory with more active contact chance for the rigid snake robot, and (b) a motion controller that controls the snake robot's real-time body movements to optimally utilize the contacts between the robot and the environment and generate desired propulsion force for the locomotion. In \cite{liljeback2010hybrid, liljeback2011experimental, GRAVDAHL2022247}, a hybrid controller is developed, where a contact event is treated individually by a reactive controller that maximizes the total propulsion force at the contacting moment. This controller has been applied to a rigid snake robot and showed its reliability in maintaining beneficial propulsion force. Kano \etal \cite{kano2012local, kano2013} proposed local reflexive mechanisms that interrogate the contact status between the snake robot and the obstacles to determine whether the contact is beneficial to the locomotion. In this approach, only a segment of the robot links neighboring to the link in contact react to the sensory feedback. On the basis of the local reflexive control method, a Tegotae heuristic scoring function is established by \cite{10.1093/icb/icaa014, Kano2017, kano2019}, for selecting which kind of reaction should be applied to the contacting link of the robot given certain situations including the snake robot's shape and contacting part of the robot. 
 From the bio-inspired perspective, inspired by the entrainment properties of the neural oscillators that allow the systems' output to be synchronized with the sensory feedback, several studies \cite{ijspeert2021science, 2023classicfeedback} introduce CPG systems to the control loop of the snake robots to process the sensory feedback signals during locomotion. However, in most existing work the locomotion control inputs of the feedback CPG systems are usually constant or rather simple sinusoidal trajectories due to the difficulty of coordinating multiple complex signals through a CPG system. Conducting both intelligent locomotion control and sensory feedback control on the CPG-driven snake robot system is still a promising but rarely explored research topic.




So far, the results on contact-aware control for soft snake robots are scarce. Although a few end-to-end soft robot controllers \cite{min2019softcon} perform well in simulation by assuming fully proprioceptive observations, it would be appealing to enable such a capability for soft snake robots in the real world. Moving from rigid snake robots to soft snake robots in contact-aware locomotion control faces many challenges: 

\begin{itemize}
    \item  Due to the continuum of the pneumatic actuators, it is infeasible to construct accurate kinematic/dynamic models for a soft snake robot, rendering model-based control ineffective or inapplicable.
    \item  The pneumatic actuators in soft snake robots have nonlinear, delayed, and stochastic dynamical response given inputs, making it difficult to achieve fast responses through model-based control compared to rigid snake robots.
    \item It is hard to embed tactile sensors in the soft material since the contact-free deformation of the soft body may interfere with the sensory data. As a result, the tactile sensors cannot be densely placed on the soft robot.
    \item Equipping tactile sensors could introduce more contact friction due to the material of the sensors, or cause more contact jamming due to the bumped shape of the sensors. 
    \item In the scenario when a soft snake robot is traversing among unknown obstacles, the tactile sensory inputs are usually discrete and unpredictable impulses, which can result in overshoot, latency and signal interference to a feedback control system. 
\end{itemize}



\begin{figure}[h!]
\centering

\subfloat[]{
\includegraphics[width=0.9\columnwidth]{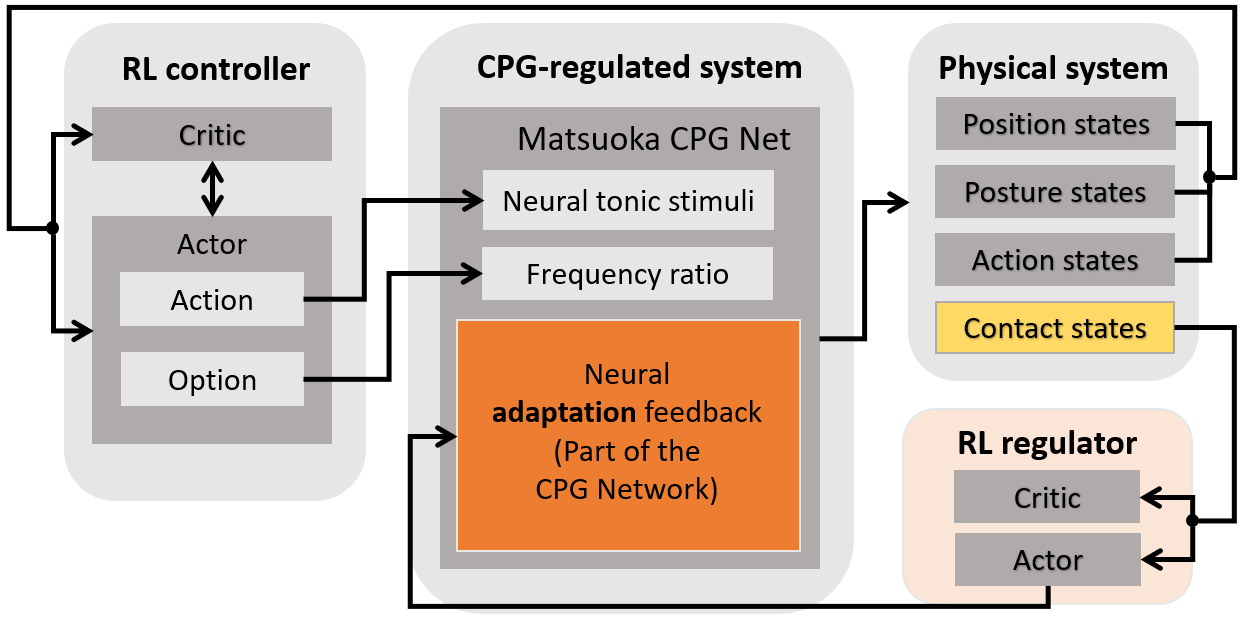}
\label{fig:learningreflex}}
\hfill
\subfloat[]{
\includegraphics[width=0.9\columnwidth]{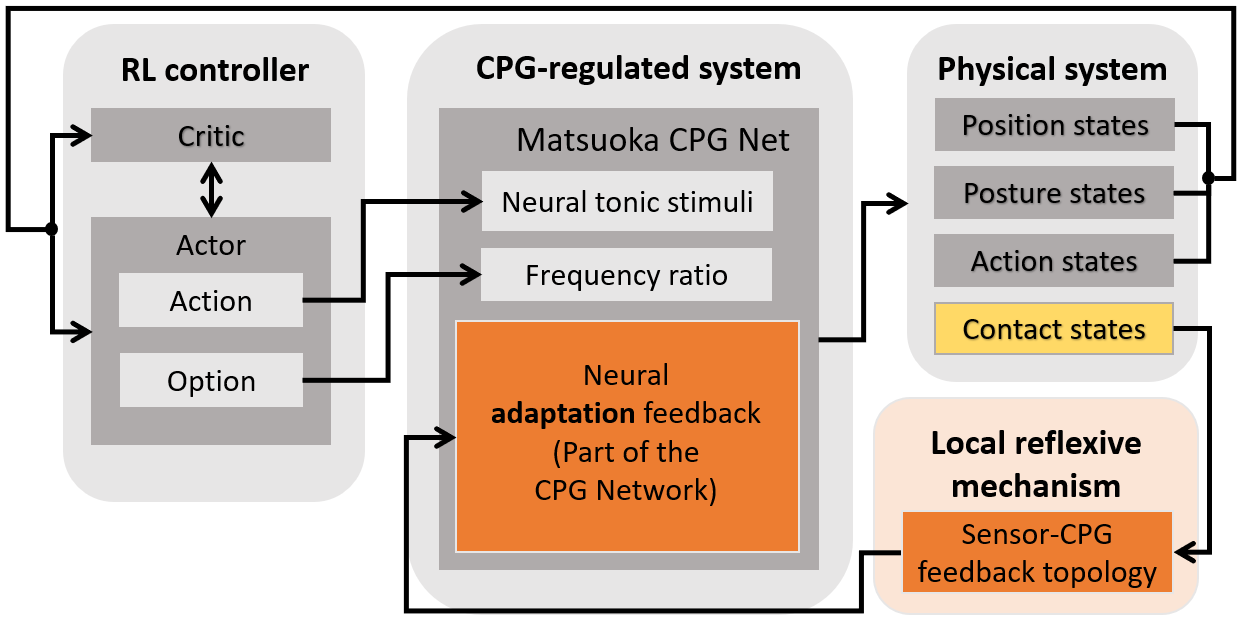}
\label{fig:localreflex}}
\caption{Schematic view of (a) learning reflexive PPOC-CPG, and (b) local reflexive PPOC-CPG controllers.}
\label{fig:schematic_view}
\end{figure}




For the above problems, our solutions and contributions are summarized as follows:
\begin{itemize}
    \item \textbf{Tactile sensor design}: We design a group of magnetic-based tactile sensors (inspired by \cite{wang2016touchsensor}) with scale-like cover mimicking \textit{Scale Sensilla}\cite{crowe2016sensilla} on the scales of real animal snakes. This structure can cover larger sensing area with a small number of sensors. As a result, it improves the sensitivity to contacts with sparsely deployed sensors on the soft snake robot. In addition, the smoothness of the covering material and the scale-like structure reduce the contact friction and collision damage on the tactile sensors during contact-aware locomotion.
    \item \textbf{CPG system design}: We develop a novel feedback mechanism of the Matsuoka oscillator to process both the locomotion control signals and the tactile sensory feedback signals during the contact-aware locomotion of the soft snake robot. Through theoretical analysis, we leverage the unique advantages of the Matsuoka oscillator's feedback mechanism for reducing the overshoot, time latency, and interference of the feedback control signals.  
    \item \textbf{Design of contact reactive controllers}: Based on our modification of the Matsuoka oscillator, we designed two different contact reactive controllers (\emph{hybrid learning controller} in Fig. 1a and \emph{local reflexive controller} in Fig. 1b) for the modified Matsuoka CPG system to work along with the learning-based goal-tracking module developed in our previous work \cite{xliu2023}. These lead to two different control schemes for the contact-aware locomotion of the soft snake robot under the obstacle-based goal-tracking tasks. Each method has its own unique disadvantages and advantages: the learning-based reactive controller is computationally expensive but can iteratively learn to improve its performance. The local reflexive method is light-weighted and more robust because of its local reactive property but is heuristic with fixed policy.
    \item \textbf{Experimental implementations and validations}: The efficacy of the two control schemes is evaluated both in simulation and real world. We applied the two control schemes to our soft snake robot equipped with specially designed contact sensors, and finally realized the promising locomotion performance for traversing densely allocated obstacles, while maintaining performance on sharp turns and target reaching in both obstacle-free and obstacle-based environments.
\end{itemize}

The paper is organized as follows. First, we provide an overview of our schematic design of a soft snake robot equipped with contact sensors in Section~\ref{sec:sys}. In Section~\ref{sec:cpg}, we present the novel feedback mechanism of the Matsuoka CPG system with theoretical analyses and proofs to show its advantages in leveraging sensory feedback signals for obstacle-aware locomotion control. Based on this, we design two sensory reactive controllers in Section~\ref{sec:main}. In Section~\ref{sec:experiment}, we design several experiments to showcase the improvement of the snake robot's locomotion performance in obstacle-based environments under closed-loop control. Section~\ref{sec:conclusion} concludes and discusses future work.

\section{Hardware Design for Contact-aware Soft Snake Robot Locomotion}
\label{sec:sys}

\subsection{Design of a Tactile Sensor}

 \begin{figure}[h!]
    \centering
    \includegraphics[width=0.9\columnwidth]{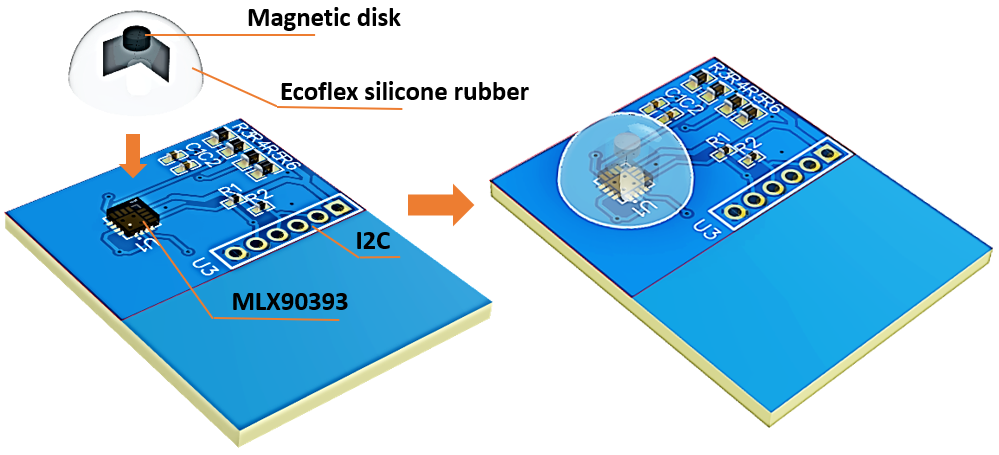}
    \caption{Electronic design of touch sensor.}
    \label{fig:sensorchart}
\end{figure}

\begin{figure*}[h!]
    \centering
    \includegraphics[width=0.99\textwidth]{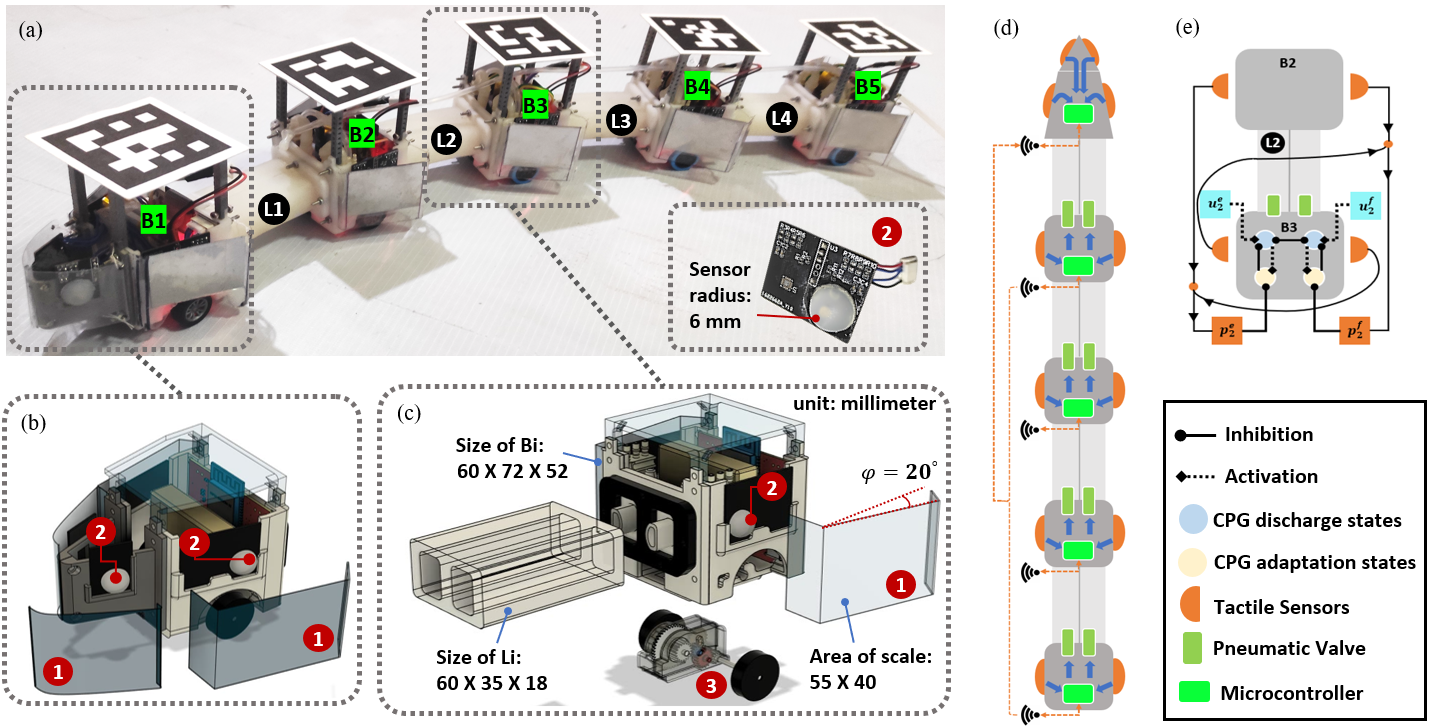}
    \caption{(a) Soft robotic snake (soft snake robot) in reality. (b) The 3D model of rigid head (left) and (c) rigid body (right). (d) Signal communication flow of soft snake robot circuit. (e) Example of sensor-CPG connection model for one link of an soft snake robot.}
    \label{fig:system_view}
\end{figure*}

In contact-aware robot locomotion, the tactile sensors are expected to detect the contact force timely. Other desired properties can be low-cost, small-sized, durable, accurate, deformable, and customizable. With these requirements in mind, we choose a magnetic field soft tactile sensor based on \cite{wang2016touchsensor}. As shown in Fig.~\ref{fig:sensorchart}, the major component of the soft tactile sensor is comprised of a small magnet cylinder disk (with $2$ mm diameter and a height of $1$ mm) and a Melexis MLX90393 Hall effect module ($3 \text{mm} \times 3\text{mm} \times 0.8\text{mm}$, QFN-16
package) separated by a hemisphere-shaped elastomer (made of Ecoflex\texttrademark~00-30 silicone rubber). The magnet piece is sealed in the elastomer through molding of the silicone first and then the elastomer is glued to the top of the hall sensor on the printed circuit board (PCB). The detailed fabrication steps are similar to \cite{wang2016touchsensor}. The working principle of this tactile sensor is based on the detection of the presence and magnitude of a magnetic field using the Hall effect. The magnetic field varies when the elastomer deforms and causes positional changes in the small magnet disk inside the elastomer. These changes can be detected and calculated by the hall sensor. The data collected by the hall sensor is sent to the motherboard via Inter-Integrated Circuit (I2C) bus. 

According to \cite[(12),(13),(14)]{wang2016touchsensor}, the three direction forces of the tactile sensor are calculated by 
\begin{align*}
    \begin{split}
    &F_z = \Sigma_{k=0}^{n} \Sigma_{i=0}^{k} C_{zj} B_{z}^i B_r^{(k-i)}, \quad j = 1,\ldots \frac{(n+1)n}{2}\\
    &F_r = \Sigma_{k=0}^{n} \Sigma_{i=0}^{k} C_{rj} B_{z}^i B_r^{(k-i)}, \quad j = 1 ,\ldots \frac{(n+1)n}{2}\\
    &F_x = \frac{B_x}{\sqrt{B_x^2+B_y^2}} F_r,\\
    &F_y = \frac{B_y}{\sqrt{B_x^2+B_y^2}} F_r.
    \end{split}
\end{align*}
Where $F_z$ is normal magnetic force, $F_r$ is shear magnetic force which can be decomposed to $F_x$ and $F_y$. Parameters $B_z$ and $B_r$ are the normal and shear magnetic intensity. $B_r$ can be decomposed to $B_x$ and $B_y$. $C_{zj}$ and $C_{rj}$ are the $j$-th coefficients of best fitting polynomials of  $F_z$ and  $F_r$ calculated by moving least squares (MLS) method, and $n$ is the order of the polynomial in the MLS method. 

For the proposed control design, the accuracy of the force direction is not a strict requirement due to the scale structure. We thus focus on the detection of contact events and simplify measuring the magnitude of the force using 
\begin{align}
    F = \sqrt{F_x^2 + F_y^2 + F_z^2}.
\end{align}
Furthermore, we introduce a sigmoid function to normalize  the sensory value of the soft tactile sensor, such that
\begin{align}
\label{eq:sigmaf}
    \sigma(F) = \frac{1}{1-\exp^{-a|F|}},
\end{align}
where $a\in \reals$ is a positive constant.


\subsection{Deployment of Scale-covered Tactile Sensors}
\label{sec:sensorscale}


The soft snake robot consists $4$ pneumatically actuated soft links (L1$\sim$L4 in Fig.~\ref{fig:system_view}a)\cite{renato2019, xliu2020, xliu2023}. The links are connected by $5$ rigid bodies (B1$\sim$B4 in Fig.~\ref{fig:system_view}a) enclosing the electronic components that are necessary to control the snake robot. Only one chamber on each link is active (pressurized) at a time. The mechanical design of the soft pneumatic actuators is discussed in \cite{luo2017toward}. In this work,
we install elastic one-direction wheels on the rigid part of the soft snake robot to realize anisotropic friction property (component 3 in Fig.~\ref{fig:system_view}c, the contribution of the elastic one-direction wheels on improving the energy efficiency of contact-aware locomotion of the soft snake robot can be found in a supplementary material\footnote{The supplementary document is available at \url{https://shorturl.ac/7bmmw}}). 
 
 There are in total $12$ tactile sensors installed on the robot. As shown in Fig.~\ref{fig:system_view}a,b,c, the components marked  number 2 are the installation positions of the tactile sensors. On top of each tactile sensor, the components marked number 1 are the scales. Each scale is made of two layers of materials -- an acrylic layer attached by a steel plate layer. The scales are designed for three major purposes:
 \begin{itemize}
    \item To significantly increase the contact sensitivity and effective sensing area of the snake robot (contact area expands about $20$ times, according to Fig.~\ref{fig:system_view}c).
    \item To reduce friction resistance on the contact surface (friction coefficient reduced from 1.7 of dry silicone to around 0.3 of polished acrylic board).
    \item To protect the silicone tactile node from frequent collisions.
 \end{itemize}

\begin{figure}[h!]
    \centering
\includegraphics[width=0.55\columnwidth]{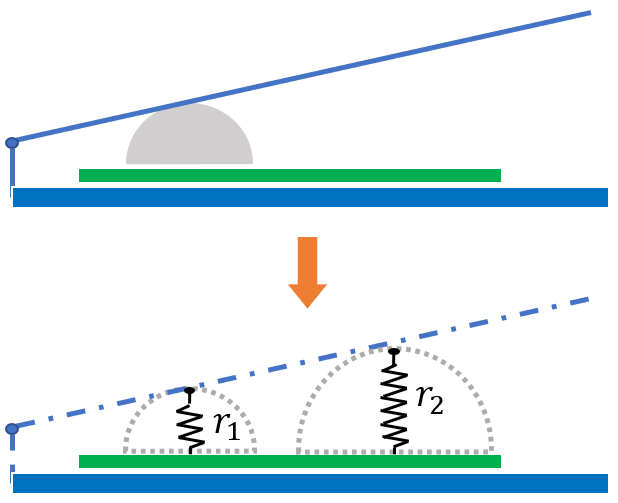}
    \caption{Tactile sensor$+$scale structure (top) versus its approximation in simulation (bottom).}
    \label{fig:sensorsim}
    \vspace{-2ex}
\end{figure}

In order to simulate the robot for reinforcement learning and sim-to-real transfer of the learned controller, we developed a physics-based high-fidelity simulator that models the inflation and deflation of the air chamber and the resulting deformation of the soft bodies with tetrahedral finite elements \cite{renato2019}. To simplify the tactile sensing function of the scale structure in simulation, we use two hemisphere elastic force fields to model the tactile sensor node$+$scale structure in reality (as shown in Fig.~\ref{fig:sensorsim}). The elastic force fields have equilibrium positions (where elastic force equals zero) everywhere on the surface of the hemispheres and have no friction on the hemispheres. The tactile readings are modeled by the elastic forces when an object's distance is smaller than the radius of any simulated tactile node. In the simulation, the reading of the two hemisphere force fields is added together to simulate the contact force signal of one tactile sensor in the real robot.

Inspired by a previous study on obstacle-aided locomotion of rigid snake robots \cite{liljeback2011experimental}, we approximate the total contact force acted on each rigid body $B_i$ based on the inputs sampled from the force sensors. For simplicity, we reduce the element of the sensory force representation by subtracting the two inputs from a pair of diagonal sensors. Therefore, for the $i$-th rigid body (counted from the head as $1$st rigid body), we have
\begin{align}
\label{eq:contact}
    N_{i} = \begin{cases}
        N_{i1}^e+N_{i2}^e - N_{i1}^f- N_{i2}^f, \quad i=1\\
        N_{i}^e - N_{i}^f, \quad i=2,3,4,5.
    \end{cases}
\end{align}
According to \eqref{eq:sigmaf}, let $F_i^e, F_i^f$ represent the magnitude of contact force detected from the left and right sensor respectively on the $i$-th rigid body of the soft snake robot, then $N_{i}^e = \sigma(F_i^e)$, and $N_{i}^f = \sigma(F_i^f)$ (left, right in reference to the heading direction of the soft snake robot). The head joint is a special case since it has two pairs of sensors installed.
The collection of contact forces in the soft snake robot from head to tail   forms a vector 
\[
    \mathbf{N} = [N_1, N_2, N_3, N_4, N_5]^T.
\]  

When the robot is in contact with an obstacle, the contact force $N_i^q=N$ on each tactile scale occurs as shown in Fig.~\ref{fig:system_view}c. However, due to the smoothness of the scale and reduction of $\varphi$ (the angle between the scale and the rigid body) during the contact, the $N_t$ component of $N_i^q$ on the tangent direction of the scale is small. As a result, we assume $|N_t|$ to be always smaller than the maximum torque of the torsion spring, so that $\tau$ and $N_t$ are in balance. Therefore, $N_t$ is neglected in the simulator and we take $N_i^q\approx N_n$ for simplicity.

\color{black}

\section{Modified Matsuoka Oscillator with Sensory Feedback}
\label{sec:cpg}

In order to effectively integrate the tactile information into the contact-aware locomotion framework of the soft snake robot, we study the effect of an additional variable on the Matsuoka oscillator for handling the feedback force signals from the tactile sensors. In this section, we analyze the properties of our method and the conventional approach  \cite{ijspeert2021science, 2023classicfeedback} from a theoretical perspective.

\begin{figure*}[h!]
\centering
\subfloat[]{
\includegraphics[width=0.99\columnwidth]{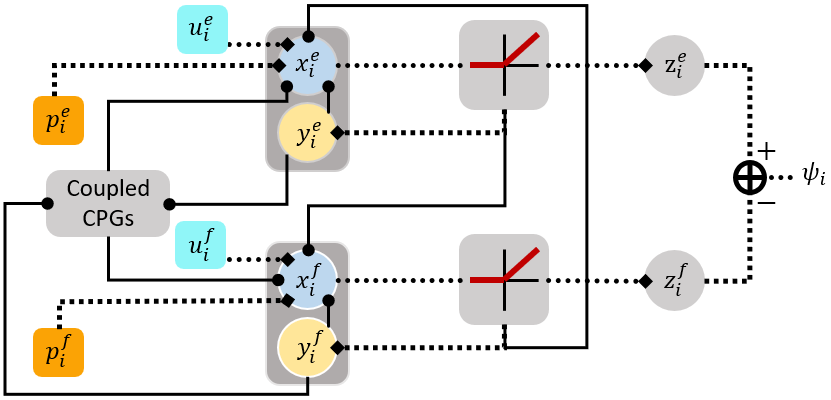}
\label{fig:matsuokaF}}
\hfill
\subfloat[]{
\includegraphics[width=0.99\columnwidth]{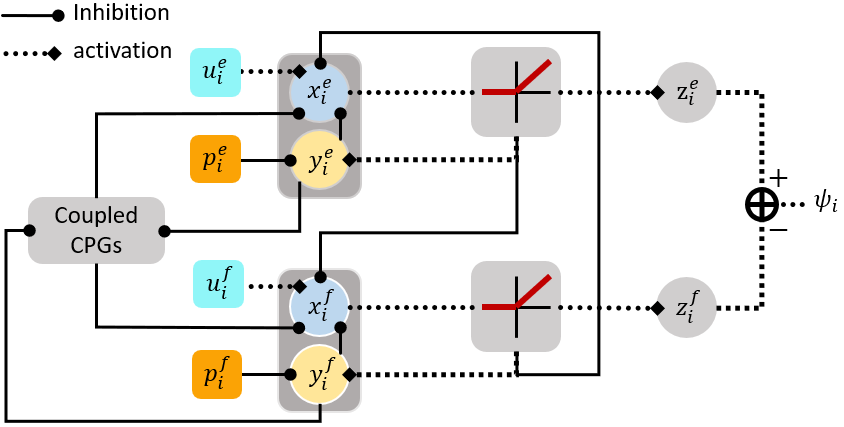}
\label{fig:matsuokaP}}
\caption{Scheme of modified Matsuoka oscillator with different allocation of feedback signals: (a) the conventionally used form (we name as MPF) Matsuoka oscillator and (b) the Adaptation feedback (AF) form Matsuoka oscillator proposed by us in this paper.}
\label{fig:modifiedmatsuoka}
\end{figure*}

In our previous work \cite{xliu2020,xliu2023}, we presented a control scheme that employs sensor-free Matsuoka oscillators to generate undulating control signals as actuation inputs for the soft snake robot to perform Serpentine locomotion. The original Matsuoka oscillator is a piece-wise linear dynamical system, which has the following form:
\begin{align}
\label{eq:matsuoka}
	\begin{split}
		&K_f \tau_r \Dot{x}_i^e = -x_i^e - a z_i^f - b y_i^e - \sum_{j=1}^N w_{ji}y_j^e + u_i^e + c,\\ 
		&K_f \tau_a \Dot{y}_i^e = z_i^e - y_i^e,\\ 
		&K_f \tau_r \Dot{x}_i^f = -x_i^f - a z_i^e - b y_i^f - \sum_{j=1}^N w_{ji}y_j^f + u_i^f + c,\\ 
		&K_f \tau_a \Dot{y}_i^f = z_i^f - y_i^f,
	\end{split}
\end{align} 
where the subscripts $e$ and $f$  represent variables related to the extensor neuron and flexor neuron, respectively. The tuple $(x_i^q, y_i^q)$, $q \in  \{e,f\}$ represents the activation state (or membrane potential) and self-inhibitory state (or adaptation state\cite{matsuoka1985sustained,matsuoka2011analysis}) of $i$-th neuron respectively,  $z_i^q =  \max(0, x_i^q)  $ is the output of $i$-th neuron. Tonic inputs $u_i^e, u_i^f$ are the major coefficients that can be controlled to affect the output bias and amplitude of the Matsuoka oscillator. The frequency ratio $K_f \in \reals$ can be manipulated to affect the natural oscillation frequency of the system. The free-response input is denoted as parameter $c$ in the equation, which is used for amplifying free-response oscillation of the CPG system. The remaining parameters are all constant weights. In system \eqref{eq:matsuoka}, all coupled signals including $x_i^q, y_i^q$ and $z_i^q$ ($q \in  \{e,f\}$) are inhibiting signals (negatively weighted), and only the tonic inputs are activating signals (positively weighted). 

Based on the form of the original Matsuoka oscillator, the question is how to integrate sensory feedback into the Matsuoka CPG system to affect its outputs efficiently? 

A conventional approach is to directly add positive force feedback (as activation signals) to the membrane potential state equations ($\Dot{x}_i^e, \Dot{x}_i^f$) of the original Matsuoka oscillator \cite[(5)]{ijspeert2021science}. Such form of feedback Matsuoka oscillator has been used in some snake robot locomotion studies \cite{ijspeert2021science, 2023classicfeedback}, where the tonic inputs (for locomotion control) of the CPG systems in these applications are mostly constant or regular sinusoidal waves. We summarize the dynamic equations of the conventional feedback Matsuoka oscillator as follows:

\noindent \textbf{Membrane Potential Feedback Form Matsuoka Oscillator: }
\begin{align}
\label{eq:matsuokaF}
	\begin{split}
		&K_f \tau_r \Dot{x}_i^e = -x_i^e - a z_i^f - b y_i^e - \sum_{j=1}^N w_{ji}y_j^e + u_i^e + b p_{i}^e + c,\\ 
		&K_f \tau_a \Dot{y}_i^e = z_i^e - y_i^e,\\ 
		&K_f \tau_r \Dot{x}_i^f = -x_i^f - a z_i^e - b y_i^f - \sum_{j=1}^N w_{ji}y_j^f + u_i^f + b p_{i}^f + c,\\ 
		&K_f \tau_a \Dot{y}_i^f = z_i^f - y_i^f,
	\end{split}
\end{align} 
where the sensory force feedback signals are represented by $p_i^e$ and $p_i^f$. As shown in Fig.~\ref{fig:matsuokaF}, the reason for naming ``membrane potential feedback form'' (MPF) to this type of Matsuoka oscillator is because the sensory feedback signals are directly added to the membrane potential states as activation (positive) signals. 

However, we have a concern about the above conventional form. In the case when the tonic inputs are complicated wave signals and the sensory feedback are irregular signals, these two types of inputs may intervene each other, and therefore fail to effectively present the impact of sensory feedback to the system output.

In \cite{sato2011}, the authors mentioned the addition of the CPG state coupling terms not only to the fast dynamic states' equations (potential membrane $\Dot{x}_i^e,  \Dot{x}_i^f$) but also to the slow dynamic states' equations (adaptation states $\Dot{y}_i^e,  \Dot{y}_i^f$) of the Matsuoka oscillator with opposite signs to improve the dynamic impact of the coupling signals.  Given the inspiration, we consider whether it is possible to add sensory feedback signals which are external impulse signals to the adaptation states of the Matsuoka oscillator? And should the feedback signals be activating or inhibiting in the adaptation state? Could this modification resolve our previous concerns? Why people didn't try this direction in their contact-aware locomotion studies? After in-depth theoretical analysis and experimental comparison, we construct a novel branch of feedback mechanism in the Matsuoka oscillator as follows.


\noindent \textbf{Adaptation Feedback Form Matsuoka Oscillator: }
\begin{align}
\label{eq:matsuokaP}
	\begin{split}
		&K_f \tau_r \Dot{x}_i^e = -x_i^e - a z_i^f - b y_i^e - \sum_{j=1}^N w_{ji}y_j^e + u_i^e +c,\\ 
		&K_f \tau_a \Dot{y}_i^e = z_i^e - y_i^e- p_i^e,\\ 
		&K_f \tau_r \Dot{x}_i^f = -x_i^f - a z_i^e - b y_i^f - \sum_{j=1}^N w_{ji}y_j^f + u_i^f + c,\\ 
		&K_f \tau_a \Dot{y}_i^f = z_i^f - y_i^f- p_i^f,
	\end{split}
\end{align} 
In this design, the tonic inputs $u_i^e, u_i^f$ as well as the free oscillation tonic input $c$ are still added to the potential membrane states ($x_i^e,  x_i^f$) as fast dynamic inputs, while the sensory feedback $p_i^e$ and $p_i^f$ are added to the equations of adaptation states ($y_i^e,  y_i^f$) of Matsuoka oscillator as slow dynamic feedback inputs (see Fig.~\ref{fig:matsuokaP}). In this paper, we name this version of the Matsuoka oscillator as the adaptation feedback (AF) form Matsuoka oscillator.



To explore the feasibility of the AF form Matsuoka oscillator, and find out the advantage of the AF form design, we discuss the difference between the AF and MPF form of Matsuoka oscillator when both the tonic inputs and the sensory feedback signals are variables. The discussion is organized by the following derivations:

Considering AF form Matsuoka oscillator described in system \eqref{eq:matsuokaP} and MPF form Matsuoka oscillator method described in system \eqref{eq:matsuokaF}. 

First, we set $x_i = x_i^e - x_i^f$, $y_i = y_i^e - y_i^f$, $z_i = z_i^e - z_i^f$, $u_i = u_i^e - u_i^f$, $p_i = p_i^e - p_i^f$. By taking subtraction between flexor and extensor in \eqref{eq:matsuokaP} and neglecting phase-related coupling terms from other primitive CPGs, we have 
\begin{align}
    \label{eq:mergedmatsuokaP}
    &K_f \tau_r \frac{d}{dt}x_i = -x_i + a z_i - b y_i + u_i\\ \nonumber
    &K_f \tau_a \frac{d}{dt}y_i = z_i - y_i - p_i.
\end{align}

Similarly, \eqref{eq:matsuokaF} can be simplified to
\begin{align}
    \label{eq:mergedmatsuokaF}
    &K_f \tau_r \frac{d}{dt}x_i = -x_i + a z_i - b y_i + u_i + b p_i\\ \nonumber
    &K_f \tau_a \frac{d}{dt}y_i = z_i - y_i.
\end{align}

If $x_i^e$ and $x_i^f$ satisfy the \textit{perfect entrainment assumption} \cite{matsuoka2011analysis}, we have $z_{{\cal_F}_i} = K(r_x) x_{{\cal_F}_i}$, where $r_x$ is the amplitude bias of $x_i$, and $K(\cdot)$ is the amplitude coefficient function of $x_{{\cal_F}_i}$ \cite[(B.4)]{xliu2023}. The subscript ${\cal_F}_{i}$ indicates the fundamental sinusoidal and constant component in Fourier expansion of the corresponding variable. Without loss of generality, let $K_f = 1$, Eq.~\eqref{eq:mergedmatsuokaP} can be further simplified to
\begin{align}
\label{eq:naivematsuokaP}
    &\tau_r \frac{d}{dt}x_{{\cal_F}_i} + x_{{\cal_F}_i} = a K(r_x) x_{{\cal_F}_i} - b y_{{\cal_F}_i} + u_{{\cal_F}_i}\\ \nonumber
    &\tau_a \frac{d}{dt}y_{{\cal_F}_i} + y_{{\cal_F}_i} = K(r_x) x_{{\cal_F}_i} - p_i.
\end{align}
And \eqref{eq:mergedmatsuokaF} can be further simplified to
\begin{align}
    \label{eq:naivematsuokaF}
    &\tau_r \frac{d}{dt}x_{{\cal_F}_i} + x_{{\cal_F}_i} = a K(r_x) x_{{\cal_F}_i} - b y_{{\cal_F}_i} + u_{{\cal_F}_i} + b p_i\\ \nonumber
    &\tau_a \frac{d}{dt}y_{{\cal_F}_i} + y_{{\cal_F}_i} = K(r_x) x_{{\cal_F}_i}.
\end{align}
Next, an ordinary differential equation can be obtained by merging the two equations in \eqref{eq:naivematsuokaP} as, 
\begin{align}
\label{eq:odematsuokaP}
    &\tau_r \tau_a \frac{d^2}{dt^2} x_{{\cal_F}_i} + (\tau_r + \tau_a - \tau_a a K(r_x)) \frac{d}{dt} x_{{\cal_F}_i} \\ \nonumber
    &+ ((b-a) K(r_x) + 1)x_{{\cal_F}_i} = \tau_a\frac{d}{dt}u_{{\cal_F}_i} + u_{{\cal_F}_i} + b p_i.
\end{align}
Merging the two equations in \eqref{eq:naivematsuokaF} yields
\begin{align}
\label{eq:odematsuokaF}
    &\tau_r \tau_a \frac{d^2}{dt^2} x_{{\cal_F}_i} + (\tau_r + \tau_a - \tau_a a K(r_x)) \frac{d}{dt} x_{{\cal_F}_i} \\ \nonumber
    &+ ((b-a) K(r_x) + 1)x_{{\cal_F}_i} = \tau_a\frac{d}{dt}u_{{\cal_F}_i} + u_{{\cal_F}_i} + b p_i + \tau_a b \Dot{p_i}.
\end{align}

From the right-hand side of \eqref{eq:odematsuokaF} and \eqref{eq:odematsuokaP}, the derivation \eqref{eq:odematsuokaF} of MPF form Matsuoka oscillator has an additional free term $\tau_a b \Dot{p_i}$ comparing to the derivation \eqref{eq:odematsuokaP} of the AF form Matsuoka oscillator. According to the superpositivity of solutions of the second order ordinary differential equation (ODE), when $p_i$ is a variable with complex waveform (e.g. collision force signals), the interference of $\tau_a b \Dot{p_i}$ will be relatively large.  Concluding the above discussion yields the following remark.

\begin{remark}
\label{re:AFvsMPFvariable}
    For the two types of feedback Matsuoka system (AF form and MPF form) satisfying perfect entrainment condition\cite{matsuoka2011analysis}, when the feedback inputs $p_i^e, p_i^f$ are variables,  the MPF form has an additional input disturbance caused by $\Dot{p_i}$, which could bring overshoot and delay to the system. Thus, the feedback inputs of AF form Matsuoka oscillator are more effective than the feedback inputs of MPF form Matsuoka oscillator.
\end{remark}

\begin{figure}[h!]
    \centering
    \includegraphics[width=1.0\columnwidth]{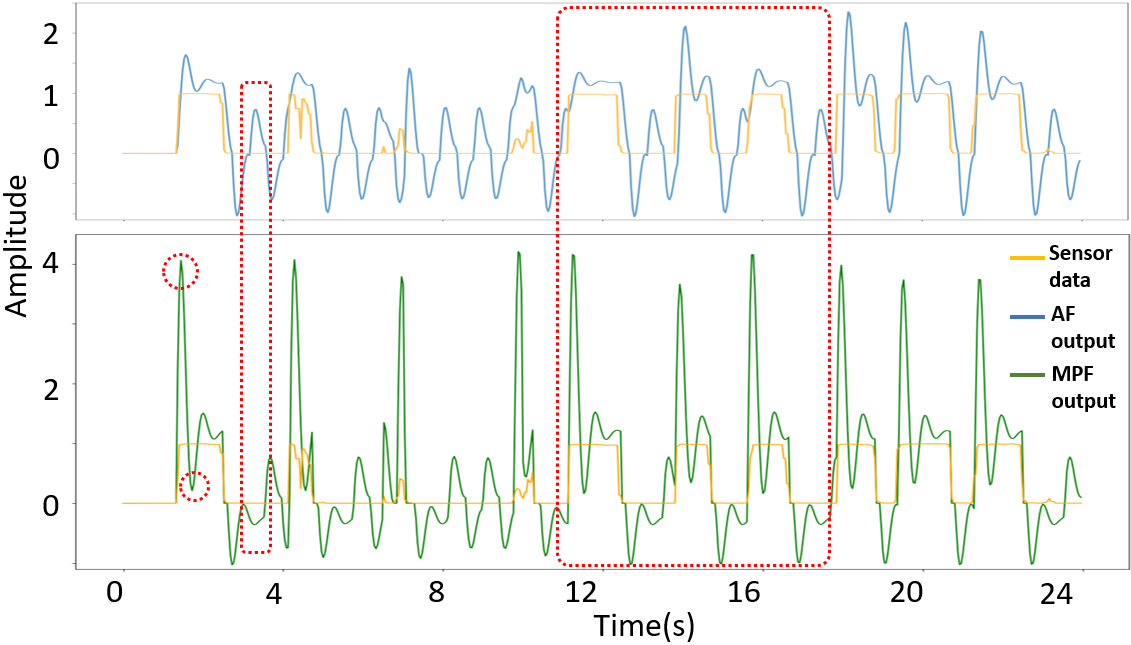}
    \caption{Output of AF form and MPF form Matsuoka oscillator given sensory feedback data.}
    \label{fig:af_mpf_cp}
\end{figure}

The result of a initial comparison test in Fig.~\ref{fig:af_mpf_cp} further shows the disturbance caused by $\tau_a b \Dot{p}_i$ in MPF form Matsuoka oscillator. In this test, we input the same section of the sensor signal (in orange) to both a primitive AF and a primitive MPF Matsuoka oscillator. The tonic inputs are kept constant for both forms of CPGs. From the MPF output curve, much larger overshoots compared to the AF form output are observed every time a significant contact signal is detected. In every recovery phase after the sensory input vanishes, the output signal of MPF form Matsuoka oscillator also get delayed before recovering to the normal oscillation, while this problem is not observed in the output of AF form Matsuoka oscillator. These observations verify the conclusion in Remark~\ref{re:AFvsMPFvariable}.

It is also worth noted that, in the AF form Matsuoka oscillator, the sensory feedback signals should be inhibiting instead of activating. The detailed illustration of this design can be found in Appendix~\ref{app:remark}.

Next, in the AF form Matsuoka oscillator, in order to compare the impact of tonic inputs $u_i^e, u_i^f$ and sensory feedback inputs $p_i^e, p_i^f$ to the output amplitude bias, we introduce the following proposition,

 \begin{figure*}[h!]
    \centering
    \includegraphics[width=0.95\textwidth]{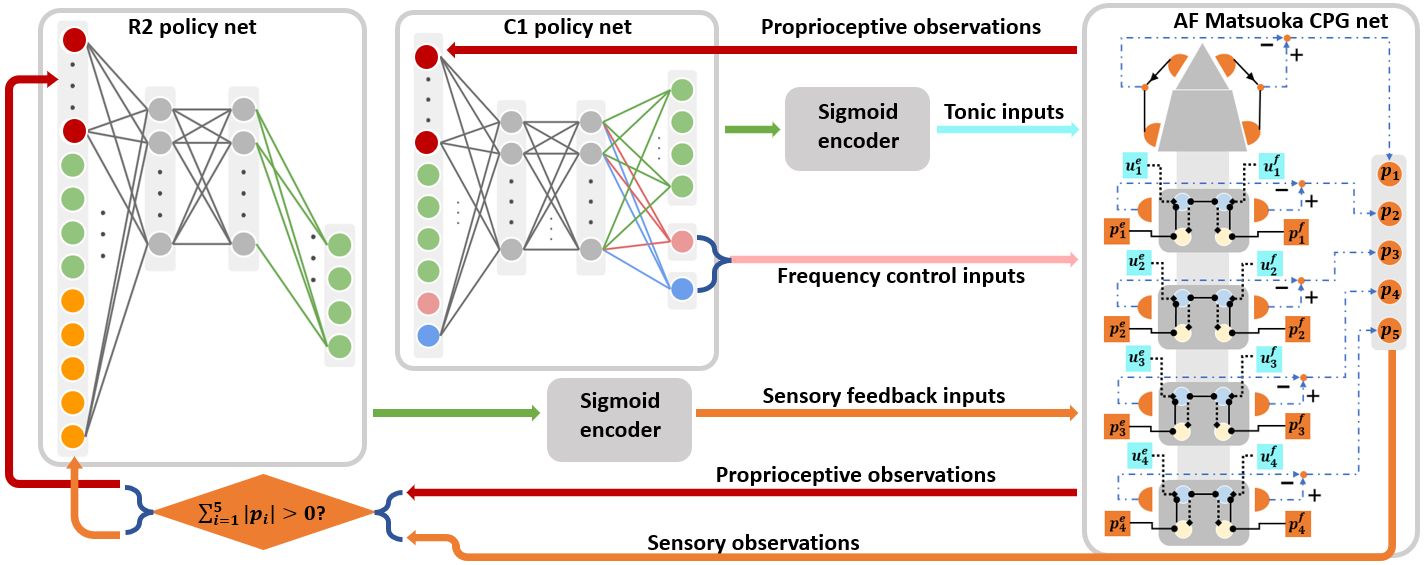}
    \caption{AF-learning control scheme.}
    \label{fig:aflearningscheme}
\end{figure*}

 \begin{figure*}[ht]
    \centering
    \includegraphics[width=0.99\textwidth]{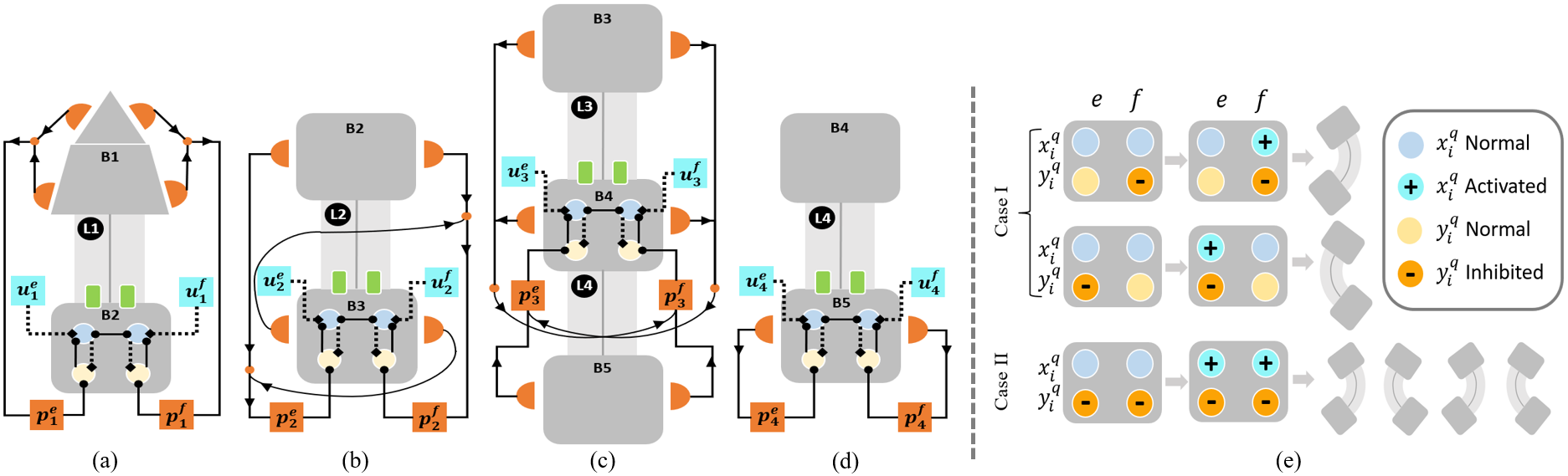}
    \caption{(a)-(d) Local reflexive structure of modified Matsuoka CPG network, and (e) Soft body actuation by the modified Matsuoka oscillator under contact.}
    \label{fig:modifiedmatsuokareflexivemethod}
\end{figure*}

\begin{proposition}
\label{prop:matsuokaduty}
    If an AF form Matsuoka oscillator satisfies the following conditions:
    \begin{inparaenum}[1)]
        \item The dynamical model of the primitive Matsuoka oscillator is harmonic, 
        \item the tonic inputs $u_i^e$ and $u_i^f$ are square wave signals and are complementary to each other,
        \item the sensory feedback signals $p_i^e$ and $p_i^f$ are square wave signals and are complementary to each other.
        \item $u_i^e$ is entrained with $z_i^e$, and $u_i^f$ is entrained with $z_i^f$,
    \end{inparaenum}
    then the oscillation bias of $z_i$ and the bias of $u_i$ satisfies the following relationship, 
    \begin{align}
    \label{eq:dutybiasP}
    \mbox{bias}(z_i) = \frac{1+2 m}{b-a+2}\mbox{bias}(u_i) + \frac{b}{b-a+2} \mbox{bias}(p_i),
    \end{align}
    where  $z_i=z_i^e-z_i^f$, $u_i = u_i^e-u_i^f$, $p=p_i^e-p_i^f$, and 
    \[
        m = \frac{1}{\pi}\frac{1}{2K_n-1+\frac{2}{\pi}(a+b)\sin^{-1}(K_n)}
    \]
     is a constant coefficient ($r_i$ indicates amplitude of state $x_i$). 
\end{proposition}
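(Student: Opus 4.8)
The plan is to obtain \eqref{eq:dutybiasP} in two stages: a purely linear stage that averages the reduced dynamics, and a nonlinear stage that supplies the single missing gain through a describing-function analysis of the half-wave rectifier $z_i^q=\max(0,x_i^q)$. I would begin from the merged difference system \eqref{eq:mergedmatsuokaP} with $K_f=1$. Conditions 1--4 force the entrained steady state to be periodic, so every state is periodic and the one-period time-average of each derivative vanishes. Averaging the two equations of \eqref{eq:mergedmatsuokaP} then gives the algebraic pair
\[
0=-\mbox{bias}(x_i)+a\,\mbox{bias}(z_i)-b\,\mbox{bias}(y_i)+\mbox{bias}(u_i),\qquad 0=\mbox{bias}(z_i)-\mbox{bias}(y_i)-\mbox{bias}(p_i).
\]
Eliminating $\mbox{bias}(y_i)=\mbox{bias}(z_i)-\mbox{bias}(p_i)$ from the first relation collapses the system to the single linear constraint
\[
\mbox{bias}(x_i)=(a-b)\,\mbox{bias}(z_i)+b\,\mbox{bias}(p_i)+\mbox{bias}(u_i).
\]
This step is routine and exact; note that $p_i$ already carries the coefficient $b$ that must survive into the final answer.

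The second stage is to close the loop with one more relation between $\mbox{bias}(z_i)$ and $\mbox{bias}(x_i)$. Under the harmonic assumption (condition 1) I would write each neuron state as a DC offset plus its fundamental, $x_i^q=X_0^q+r_i\cos(\omega t+\phi^q)$, and compute the DC component of the rectified output $z_i^q$ by integrating over its conduction interval; with the operating point encoded by the normalized ratio $K_n$, this integral produces exactly the $\sin^{-1}(K_n)$ and $\sqrt{1-K_n^2}$ contributions that appear in $m$. The complementarity of the square waves (conditions 2--3) makes the extensor and flexor operating points antisymmetric, so in $\mbox{bias}(z_i)=\mbox{bias}(z_i^e)-\mbox{bias}(z_i^f)$ the amplitude-dependent pieces cancel and leave the clean half-wave gain $\tfrac12$, i.e.\ a leading term $\tfrac12\mbox{bias}(x_i)$. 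The entrainment of $u_i^q$ with $z_i^q$ (condition 4) then breaks this symmetry: because the square-wave tonic input is phase-locked to the active phase of $z_i^q$, its duty cycle imprints directly onto the output, contributing an extra term proportional to $\mbox{bias}(u_i)$. My target closure is therefore
\[
\mbox{bias}(z_i)=\tfrac12\,\mbox{bias}(x_i)+m\,\mbox{bias}(u_i).
\]

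The hardest part is pinning the coefficient $m$ of that entrainment term to the stated form. For this I would carry out a fundamental-harmonic balance on the reduced equations \eqref{eq:naivematsuokaP}, using the perfect-entrainment relation $z_{{\cal_F}_i}=K(r_x)x_{{\cal_F}_i}$ to express the first harmonic of the rectified output. Matching the first-harmonic coefficients yields an amplitude equation in which the combination $2K_n-1+\tfrac{2}{\pi}(a+b)\sin^{-1}(K_n)$ emerges from the rectifier describing function together with the $-by_i+az_i$ coupling (hence the appearance of $a+b$); solving this equation for $r_i$ and feeding it back into the gated-rectifier bias of the previous step delivers precisely $m=\tfrac1\pi\big[\,2K_n-1+\tfrac2\pi(a+b)\sin^{-1}(K_n)\,\big]^{-1}$. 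I expect the bookkeeping here to be the real obstacle: aligning the extensor/flexor phases and the square-wave duty structure so that the $\sin^{-1}(K_n)$ terms assemble exactly into $m$, rather than into a messier nonlinear expression, is where all the care is needed.

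Finally I would assemble the pieces. Substituting the closure into the averaged constraint $\mbox{bias}(x_i)=(a-b)\mbox{bias}(z_i)+b\,\mbox{bias}(p_i)+\mbox{bias}(u_i)$ and solving the resulting scalar linear equation for $\mbox{bias}(z_i)$ clears the factor $\tfrac12$, collects the denominator $b-a+2$, and produces the numerators $1+2m$ on $\mbox{bias}(u_i)$ and $b$ on $\mbox{bias}(p_i)$, which is exactly \eqref{eq:dutybiasP}. Everything downstream of the amplitude computation is a one-line linear elimination, so the entire difficulty is concentrated in deriving $m$.
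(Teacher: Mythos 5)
Your proposal is correct and follows essentially the same route as the paper's proof: time-averaging the reduced AF dynamics to get $\mbox{bias}(x_i)=(a-b)\mbox{bias}(z_i)+b\,\mbox{bias}(p_i)+\mbox{bias}(u_i)$, establishing the closure $\mbox{bias}(z_i)=\tfrac12\mbox{bias}(x_i)+m\,\mbox{bias}(u_i)$ via the sinusoidal approximation and the rectifier's describing functions (the paper gets the $\tfrac12$ from the Taylor expansion $L(r)\approx\tfrac1\pi+\tfrac r2$ and the $m\,\mbox{bias}(u_i)$ term from the extensor--flexor amplitude difference $\tfrac1\pi(A_i^e-A_i^f)$), and then the same one-line linear elimination. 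The only practical difference is that where you propose to derive the amplitude equation fixing $m$ by first-harmonic balance, the paper simply imports Matsuoka's amplitude formula $A_i^q=\bigl(\mbox{bias}(u_i^q)+c\bigr)/\bigl(r_i^q+(a+b)L(r_i^q)\bigr)$ and the harmonic condition $r_i^e=r_i^f=K^{-1}(K_n)$ from the cited reference, which sidesteps the bookkeeping you flag as the hard part.
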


\begin{proof}
    (See Appendix~\ref{app:theory}.)
\end{proof}

Proposition~\ref{prop:matsuokaduty} shows that in the AF form Matsuoka oscillator, there exists a binary linear relationship between the bias of $u_i$, $p_i$, and the bias of $z_i$. Because the range of $u_i$ and $p_i$ are both limited within $[0, 1]$,  the impact of $p_i$ is larger than $u_i$ when the coefficient of $\mbox{bias}(p_i)$ is larger than the coefficient of $\mbox{bias}(u_i)$. In this paper, the constant parameters of the Matsuoka oscillator are configured according to Table~\ref{tab:config}. Under this condition, we have $b>>1+2m$. The above discussion indicates that when contacts occur, $\mbox{bias}(p_i)$ makes major impact to $\mbox{bias}(z_i)$.




Overall, the properties of AF form Matsuoka oscillator show its flexible and accurate capability of reacting to contact events. Based on this, we can further develop contact-aware controllers for the soft snake robot locomotion.

\section{Design of Controllers}
\label{sec:main}

In the literature on obstacle-aided snake robot locomotion control, there are two ways to process contact sensory feedback signals. One, referred to as \emph{hybrid control} \cite{liljeback2010hybrid}, is the event-triggered hybrid control that utilizes an individual event-triggered controller to optimize the control command only when the contacts happen together with the main locomotion controller that controls the robot at every time step. One special feature of the contact event-triggered controller is that even when only a single part of the robot body is in contact, this controller will send a control command to the whole system. Another method referred to as \emph{reflex control}\cite{kano2012local}, is to use the local reflexive method to set distributed rules for the snake links such that only a few neighboring links will react to the sensory feedback, and such reactions are independent to the main locomotion controller.  

Taking inspiration from the above two directions and combining them with AF form Matsuoka CPG system respectively, we propose two different contact reactive control schemes: 1) the AF-learning method that adopts the concept of \emph{hybrid control} by utilizing a learning-based contact event-triggered controller to observe the sensory data and operate the sensory feedback coefficients in the Matsuoka CPGs of the soft snake robot (as shown in Fig.~\ref{fig:learningreflex}); and 2) the AF-local method that adopts the concept of \emph{reflex control} by directly connecting contact sensory signals to the sensory feedback coefficients of the Matsuoka CPGs following local reflexive rules (as shown in Fig.~\ref{fig:localreflex}). Each of the two methods has its own specialties: although the AF-learning method is computationally expensive, it can achieve greater performance through training. On the other hand, although the local reflexive method has a fixed heuristic rule, it is more light-weighted and robust to the disturbances because of its local reactive property. 

As a result, the investigation and comparison of the above two different contact reactive control designs are necessary to comprehensively verify the advantages of the sensory feedback mechanism in the AF form Matsuoka oscillator for the soft snake robot's contact-aware locomotion.




\subsection{Event-triggered learning-based sensory reactive controller with AF Form CPG System}
\label{sec:rl}

In the narrative of this paper, the extensor and flexor in the CPG system are assigned with left and right sides of the snake robot respectively (taking the heading direction of the robot for reference). 

In the AF-learning method, we introduce the concept of hybrid control to a model-free learning-based control framework, which is composed of two controllers in the contact-aware goal-tracking task of soft snake robot -- including an RL controller for goal-tracking locomotion named C1, and an event-triggered RL controller for contact reactive control named R2, which only outputs actuator signals when the contact event-triggering condition is satisfied. The scheme of the controller is shown in Fig.~\ref{fig:aflearningscheme}. 

We use a goal-tracking controller developed in our previous work\cite{xliu2023}, called Free-response Oscillation Constrained Proximal Policy Optimization Option-Critics with Central Pattern Generator (FOC-PPOC-CPG), as the C1 controller. The C1 controller takes fully proprioceptive observations of the soft snake robot's dynamic states and outputs control commands by manipulating the tonic inputs of the CPG network as low-level primitive actions and the frequency ratio of the CPG network as high-level options.

For R2 controller, we define the \emph{contact event-triggering condition} as follows: 
At each time step, given the contact force vector $\mathbf{f}$ and contact detection threshold $\epsilon$. The event-triggering condition for the contact-aware scenario is $\norm{\mathbf{f}} > \epsilon$. When the event-triggering condition is satisfied, R2 is triggered to join the manipulation of the CPG system. 

Although it is not necessary for R2 to use the same learning algorithm as C1, for simplicity we also train R2 with the PPOC-CPG framework, which shares the same reward function and AF form CPG system with C1, but with different observation states and actions. 

In the obstacle-based locomotion scenario, there are in total $19$ observation states for R2, denoted as $\bm{\zeta} = \{\zeta_1, \zeta_2, ..., \zeta_{19}\}$, where $\zeta_{1:4}$ represents the dynamic state of the robot referenced on the goal position, $\zeta_{5:8}$ represents the real-time body curvature of the $4$ soft links, $\zeta_{9:14}$ contains actions in the last time step including the previous option and the terminating probability, $\zeta_{15:19}$ contains the pre-processed contact forces. Similar to C1, the actions of R2 are mapped to fit the sensory feedback signals of the Matsuoka CPG network of the soft snake robot. Next, we define a four-dimensional action vector $\vec{a}= [a_1, a_2, a_3, a_4]^T \in \reals^4$ and map $\vec{a}$ to sensory feedback vector $\vec{p}$ as follows, 
\begin{equation}
	\label{eq:decoder}
	p_i^e = \frac{1}{1+e^{-a_i}},\text{ and } 
	p_i^f =1-p_i^e, \text{ for } i=1,\ldots, 4.
\end{equation}
This mapping bounds $p_i^e$ and $p_i^f$ within $[0, 1]$.  The sensory feedback input vector $\mathbf{p}$ for the 4-link snake robot is an eight-dimension vector, 
\begin{equation*}
    \mathbf{p} = [p_1^e, p_1^f, p_2^e, p_2^f, p_3^e, p_3^f, p_4^e, p_4^f]^T.
\end{equation*}
The learning process of the whole control scheme (as shown in Fig.~\ref{fig:learningreflex}) is: C1 is first trained in an obstacle-free environment in simulation. After C1 is converged, we fix C1 as a regular controller for goal-tracking purposes. C1 policy is always effective regardless of the triggering of the contact events. Then we train R2 in the environment with randomly generated obstacle mazes in simulation until convergence. R2 is effective only when the contact event-triggering condition is satisfied. According to Remark~\ref{re:AFvsMPFvariable} and Proposition~\ref{prop:matsuokaduty}, when the parameters of the AF form Matsuoka CPG system satisfy Table~\ref{tab:config}, when R2 is effective, it will dominate the control of the CPG system (contact-awareness over goal-awareness).

\subsection{Local Reflexive Control of Contact-aware slithering locomotion with AF Form CPG System}
\label{sec:localreflex}





In this section, we describe a different learning-based controller, denoted as AF-local because the local reflexive mechanism is based on the AF form Matsuoka oscillator. According to \eqref{eq:dutybiasP} and Proposition~\ref{prop:matsuokaduty}, if the parameters of an AF form Matsuoka oscillator satisfy Table~\ref{tab:config}, we have $b >> 1+2m$, so that the sensory input $p_i$ will play a major role to influence the tonic input $u_i$ when contact events are detected by the tactile sensors. 

We validated the property through experiments. As shown in Fig.~\ref{fig:modifiedmatsuokareflexivemethod}e, Case I describes a situation when there comes a $p_i^q$ signal in only one side, the adaptation variable $y_i^q, (q\in{e,f})$ on the same side is inhibited, and result in activation of the corresponding $x_i^q$ (at the same time the opposite $x_i^q$ state is inhibited). At this moment, no matter what value of tonic input $u_i^e, u_i^f$ are given within their range $u_i^q\in [0, 1]$, the soft actuator will always bend towards the opposite direction of the incoming $p_i^q$ signal. Case II shows the case when both $p_i^e$ and $p_i^f$ are inhibiting $y_i^e$ and $y_i^f$ respectively, both $x_i^e$ and $x_i^f$ will be strongly activated, leading to almost a free-response oscillation output regardless the values of $u_i^e, u_i^f$. According to Proposition~\ref{prop:matsuokaduty}, the oscillation bias of the CPG output in the Case II situation depends on $\mbox{bias}(p_i)$, where $p_i = p_i^e - p_i^f$.

We take the inspiration of the local reflexive mechanism from \cite{kano2012local, Kano2017} such that only the links that are close to a contact sensor may react to its contact events. Due to the differences in structure (antagonistic actuators, partially tunable chambers) between our pneumatically actuated snake robot and the real-time tunable spring actuated snake model described in \cite{kano2012local}, we have our specific principles for constructing the reflexive loop between the sensors and the sensory feedback inputs of the CPG network (Fig.~\ref{fig:modifiedmatsuokareflexivemethod}):
\begin{itemize}
 \begin{figure}[ht]
    \centering
    \includegraphics[width=0.99\columnwidth]{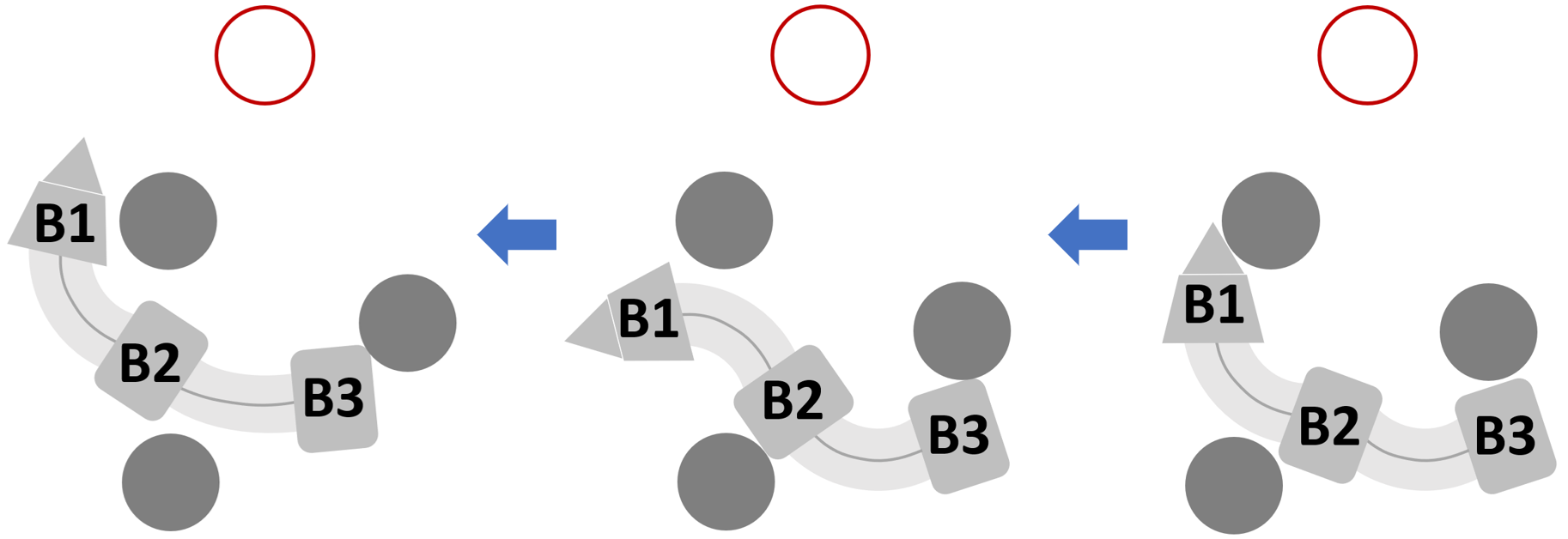}
    \caption{Example of reflexive mechanism on link L1.}
    \label{fig:headjointreflex}
\end{figure}
    \item Since the head of the snake robot maneuvered by the goal-tracking controller is always heading in the goal direction, it could be blocked by an obstacle on the path to the target location if the head cannot properly react to the contact and turn away from the obstacle. Thus the snake robot's head should always bend in the opposite direction to the major contact event, which means that the ipsilateral chamber of L1 link to the contact side of B1's sensor will be actuated. Figure~\ref{fig:headjointreflex} provides an example showing the reflexive behavior of the L1 link when the head sensor on B1 touches an obstacle.
    \item As the rudder of steering and source of propulsion, the snake robot's tail should always push itself against the obstacles to keep oscillation with a larger amplitude. So the ipsilateral chamber of L4 link to the contact side of B5's sensor will be actuated. In addition, in a 4-link soft snake robot, the role of the tail during slithering locomotion should be strengthened to generate more propulsion. Therefore, we extend the impact of the tail sensors to the actuators in the L3 soft link.
     \begin{figure}[ht]
    \centering
    \includegraphics[width=0.65\columnwidth]{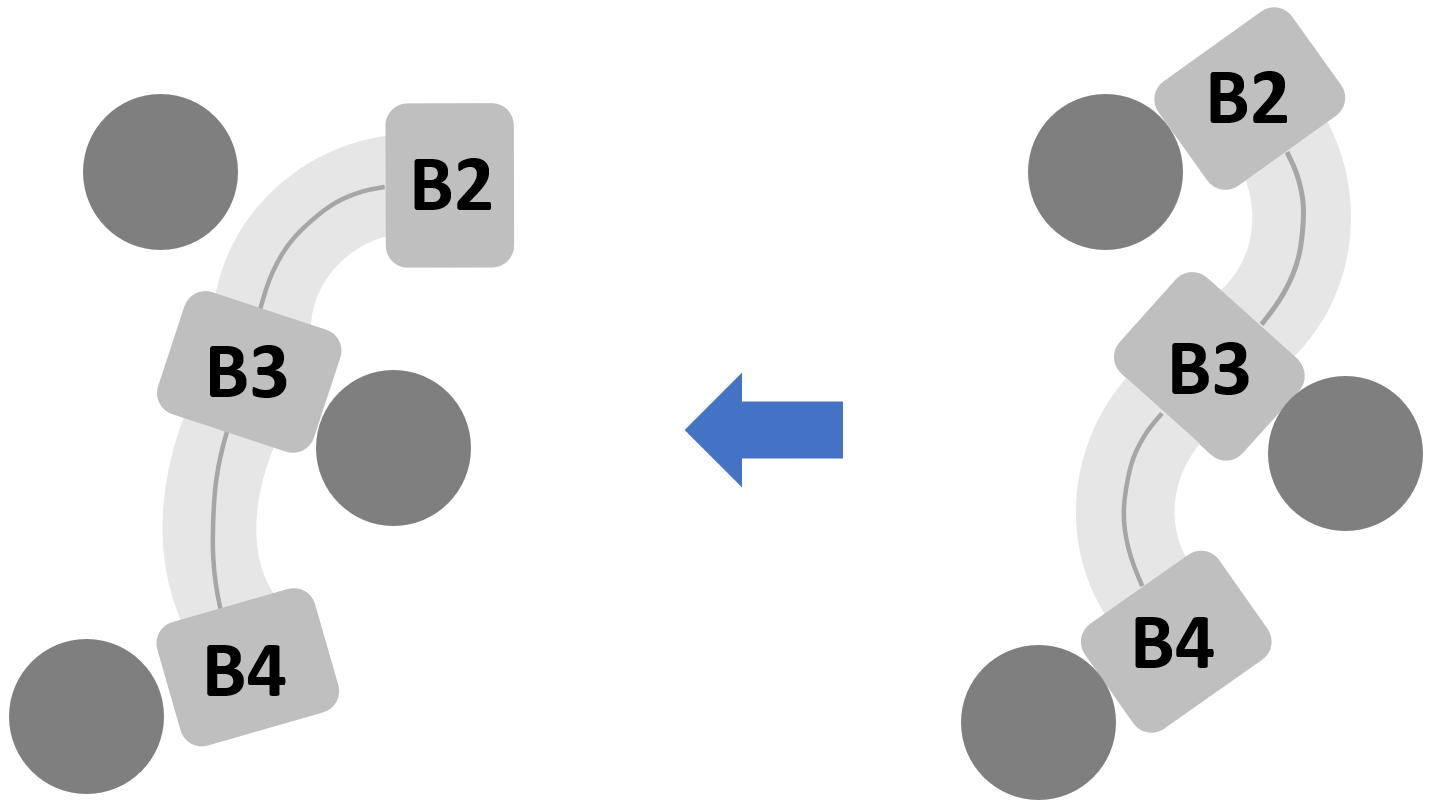}
    \caption{Example of reflexive mechanism on L2 and L3 links.}
    \label{fig:midjointreflex}
\end{figure}
    \item Beyond head and tail links, the other soft body links' CPG nodes should refer to their neighboring sensors to determine their reflexive behaviors accordingly. To design the connection between CPG nodes of the body links and the corresponding sensors, we refer to the jamming case that mostly occurred in the contact-aware locomotion of the soft snake robot. As shown in Fig.~\ref{fig:midjointreflex}, when B2, B3, and B4 rigid parts are in contact with the obstacles on the opposite sides, the situation leads to a typical jamming scenario for our soft snake robot in contact-aware locomotion. In this situation, the L3 link is supposed to decrease its bending curvature to avoid jamming, while L2 should actuate its ipsilateral chamber (extensor) to create more space for free oscillation controlled by the goal-reaching controller.
\end{itemize}

Based on the above features and former experience in designing local reflexive control rules\cite{kano2012local}, we design the topology of the sensor connection to each CPG node in the soft snake robot's ``vertebrate'' system. As shown in Fig.~\ref{fig:modifiedmatsuokareflexivemethod}, the sensory feedback signals $p_i^e, p_i^f$ are normalized when receiving non-zero inputs from the connected tactile sensors.
Before formulating $p_i^e, p_i^f$, we first define set ${\cal D}_{i}, i = 1,2,3,4$ as the set of sensor signals' numbers connected to the $i$-th Matsuoka CPG node. For example, for $3$rd CPG node in Fig.~\ref{fig:modifiedmatsuokareflexivemethod}c, ${\cal D}_{3} = \{3, 4, 5\}$. In addition, we define the connection marker array $\mathbf{J} = [J_1, J_2, J_3, J_4, J_5] = [-1, -1, 1, 1, -1]$. The value in $\mathbf{J}$ is assigned based on the way of connection between the sensors and the CPG network.
\begin{align}
\label{eq:localreflexiverule}
    \begin{split}
        p_i^e = \frac{\Sigma_{k\in {\cal D}_i} I^e(N_k)|N_k|}{\Sigma_{k\in {\cal D}_i} |N_k| + \delta^+},\\
        p_i^f = \frac{\Sigma_{k\in {\cal D}_i} I^f(N_k)|N_k|}{\Sigma_{k\in {\cal D}_i} |N_k| + \delta^+}.
    \end{split}
\end{align}
where $\delta^+ \in \reals^{+}$ is a small positive number to avoid division by zero, and
\begin{align*}
    \begin{split}
        I^e(N_k) = \max\{0, -\mbox{sgn}(J_k N_k)\},\\
        I^f(N_k) = \max\{0, \mbox{sgn}(J_k N_k)\}.
    \end{split}
\end{align*}

More specifically, the mechanism of \eqref{eq:localreflexiverule} acting on the actuators of the soft snake robot can be explained as follows:
\begin{itemize}
    \item In the L1 CPG node (Fig.~\ref{fig:modifiedmatsuokareflexivemethod}a), the sensors are connected to the same side of sensory feedback inputs $p_i^e, p_i^f$ of L1 CPG. When one side of the B1 sensors is in contact, the actuation of the L1 link follows Case I, which bends toward the opposite direction to the triggered sensors.
    \item In L2 CPG node (Fig.~\ref{fig:modifiedmatsuokareflexivemethod}b), the sensors on B2 are connected to the same side of sensory feedback inputs of L2 CPG, while the sensors on B3 are connected to the opposite side of sensory feedback inputs of L2 CPG. When only the B2 or B3 sensor is triggered, or both B2 and B3 receive contact feedback from the opposite side, L2 will behave in Case I. When B2 and B3 have contacts on the same side, both $y_2^e$ and $y_2^f$ will be inhibited, leading to Case II behavior of L2.     
    \item In L3 CPG node (Fig.~\ref{fig:modifiedmatsuokareflexivemethod}c), the sensors on B3 and B4 are connected to the opposite side of sensory feedback inputs of L3 CPG, while the B5 sensors are connected to the same side of sensory feedback inputs of L3 CPG. Consider a single sensor-triggered case, when only the B3, or B4, or B5 sensor is triggered, L3 will also behave in Case I. For two sensors triggered case: when only (B3 and B4) are triggered on the same side, or (B3 and B5) or (B4 and B5) are triggered on the opposite side, L3 will behave in Case I; when only (B3 and B4) are triggered on the opposite side, or (B3 and B5) or (B4 and B5) are triggered on the identical side, L3 will behave in Case II. For three sensor-triggered cases, when B3 and B4 are triggered on the same side opposite to the contact side of B5, L3 will behave in Case I, otherwise, L3 behave in Case II. 
    \item In the L4 CPG node (Fig.~\ref{fig:modifiedmatsuokareflexivemethod}d), the B5 sensors are connected to the same side of sensory feedback inputs of L4 CPG. When one side of the B5 sensor is in contact, the actuation of the L4 link follows Case I, which bends toward the opposite direction to the triggered sensors. 
\end{itemize}

The overall workflow of AF-local is shown in Fig.~\ref{fig:localreflex}. The local reflexive mechanism introduced in this section works independently and maps the tactile sensor data to the sensory feedback signals of the CPG system. In the meantime, the tonic input signals in the same CPG system are controlled by a C1 controller introduced in Section~\ref{sec:rl}, which only focuses on the goal-tracking control of the soft snake robot.

In order to compare AF form Matsuoka CPG system with the conventional MPF form Matsuoka CPG system, we also develop MPF-local and MPF-learning controllers by replacing the AF form Matsuoka oscillator with MPF form Matsuoka oscillator in the two control methods introduced in Section~\ref{sec:localreflex} and Section~\ref{sec:rl}. In  Section~\ref{sec:experiment}, we will comprehensively compare the performance of the AF-local, AF-learning, MPF-local, and MPF-learning methods.

\subsection{Design of the shared reward function}
\label{sec:reward}

We now present our design for the reward function shared by both locomotion and contact-aware controllers. Our design will ensure that by maximizing the discounted sum of reward, the learned controller can achieve efficient locomotion and accurate set-point tracking. 

To improve learning efficiency, we employ a potential field-based reward function. Artificial potential field (APF) is widely applied in planning problems and potential game theory \cite{khatib1986real, park2001obstacle, dong2012strategies} to accelerate the process of searching for the optimal strategy. 
The potential field can be classified into two categories -- the attracting field for target reaching and the repulsive field for obstacle avoidance. The attracting field function is defined as follows
\[
    U_{att}(\mathbf{p}) = \frac{1}{2} k_{att} ||\mathbf{p} - \mathbf{p}_g||^2,
\]
where $\mathbf{p}$ is the coordinate of the agent and $\mathbf{p}_g$ is the coordinate of the goal. Coefficient $k_{att}$ is a positive constant indicating the strength of the attractive potential field. Since the attracting gravity is always pointing toward the goal coordinate from any position of the map, the value of gravity force should be negative. By taking the negative gradient of $U_{att}$, we have the attracting force function
\[
    \mathbf{F}_{att}(\mathbf{p}) = -\nabla U_{att} = -k_{att} (\mathbf{p}-\mathbf{p}_g).
\]

The reward is designed to encourage the goal-reaching, guided by the artificial potential field. We designed the reward to be composed of two rewards: 
\begin{equation}
	R = \omega_1 R_{goal}  + \omega_2 R_{att},
\end{equation}
where $\omega_i,i=1,2,3$ are constant weights. $R_{goal}$ is the termination reward for reaching a circular accepting area centered at the goal. 
\[
R_{goal}= \cos \theta_g \sum_{k=0}^i{\frac{1}{l_k} \mathbf{1}(\rho_g < l_k)}.
\]
where $\theta_g$ is the deviation angle between the locomotion direction of the snake robot and the direction of the goal, $l_k$ defines the radius of the accepting area in task-level $k$, for $k=0,\ldots, i$. $\rho_g = \norm{ \mathbf{p}-\mathbf{p}_g}$ is the linear distance between the head of the robot and the goal, and $\mathbf{1}(\rho_g < l_k)$ is an indicator function to determine whether the robot's head is within the accepting area of the goal. 
$R_{att}$ is the reward function of the attracting potential field:
\[
R_{att} = \mathbf{v} \cdot \mathbf{F}_{att}(\mathbf{p}), 
\]
where $\mathbf{v}$ is the velocity vector. The dot product between $\mathbf{v}$ and the potential field vector represents the extent of the agent's movement on following the potential flow in the task space.

\section{Experiments}
\label{sec:experiment}

\subsection{Signal Communication and Obstructed Environment Setting}
\begin{figure}[h!]
	\centering
 \includegraphics[width=1\columnwidth]{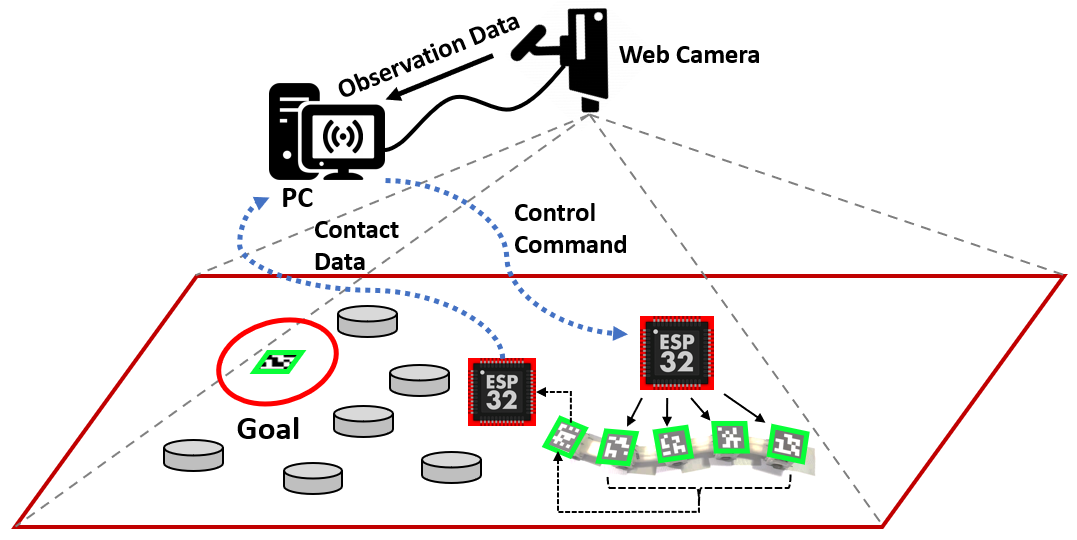}
	\caption{Experiment setup of the contact-aware goal tracking locomotion task in reality.}
	\label{fig:mocap}

\end{figure}

The two-dimensional dynamic states of the soft snake robot are captured and calculated by a web camera (works under $120$ Hz) hanging on the ceiling of the experiment room. We use Aruco \cite{GARRIDOJURADO20142280} to detect and localize QR codes attached to every rigid body of the snake robot and the goal position. Figure~\ref{fig:mocap} shows the experiment setup for the real snake robot goal-reaching tasks. In this work, we update two major parts of the experiment settings:  
\begin{itemize}
    \item In the signal communication part, each ESP32 chip collects contact sensor information from local I2C and shares the data with the head chip through WiFi. In every time step, the head ESP32 chip packs all the sensor data and sends it back to the PC controller. The controller program running on the desktop computer receives the observation states from the web camera and the robot, generates the control commands and passes them to the ESP32 chips on the snake robot through WiFi communication. The ESP32 chips on the snake bodies translate the commands into Pulse Width Modulation (PWM) signals to activate or deactivate the valves \cite{luo2017toward, renato2019} on the snake robot. The communication rate between the PC controller and the snake robot is $30$ Hz.
    \item In the environment setting, a number of tin cans filled with stones and sand are placed in the experiment field as obstacles. Each vertical peg in Fig.~\ref{fig:mocap} represents a cylinder tin can with a diameter of $100$ millimeters (mm) and height of $80$ mm. The average weight of the obstacles is around $1.1$ kg each, and the weight of the soft snake robot is $0.7$ kg (including batteries). It has been tested to ensure that any collision caused by the soft snake robot will not move the obstacles. 
\end{itemize}

\subsection{Simulated Training and Evaluation}
\label{sec:simtrain}

\noindent \textbf{Reinforcement Learning Configuration:} In the simulated training part, the goal-reaching controller C1 is a pre-trained module as configured in \cite{xliu2023}. In this work, the contact-aware regulator R1 in AF-learning and MPF-learning controllers is trained in a goal-reaching task with a randomly generated $6\times 5$ obstacle maze. During the training process of R1, the distance between the robot and the goal is fixed to $1.5$ meters. The deviation angle between the snake robot and the goal is initially sampled from $0\sim60$ degrees with a uniform distribution. In the simulator, the distance between every two obstacles is sampled between $120\sim180$ mm. The coordinate of each obstacle is added by an additional clipped standard Gaussian noise ($\omega \sim \mathcal{N}(0, 1)$, clipped by $-0.01<\omega<0.01$). The method of simulating contact sensors is introduced in Section~\ref{sec:sensorscale}. In order to compensate for the mismatch between the simulation and the real environment, most notably the friction coefficients, we employ a domain randomization technique \cite{tobin2017domain}, in which a subset of physical parameters are sampled from several uniform distributions. The range of distributions of domain randomization (DR) parameters used for training are in Table \ref{tab:dr} (see Appendix~\ref{app:data}). The whole training process of each method runs on $4$ simulated soft snake robots (Rendered by Nvidia Flex) on a workstation equipped with an Intel Core i7-9700K, 32GB of RAM, and one NVIDIA RTX2080 Super GPU. 

\noindent \textbf{Task specification: }In the contact-aware locomotion task, the robot is required to traverse an array of obstacles and reach the randomly generated goals. Similar to the real-world setting in Fig.~\ref{fig:mocap}, there is also an accepting radius in the simulation for each goal-reaching task, which means that the robot needs to be close enough to the goal in order to succeed and receive a terminal reward. At each time step, the robot also receives a reward from the potential field defined in Section~\ref{sec:reward}. If the agent reaches the accepting region of the current goal, a new goal is randomly sampled. In the failing situation, when the robot is jammed by obstacles for a certain amount of time, the desired goal will be re-sampled and updated. The starvation time threshold for failing condition is $900 \text{ ms}$. In addition, if the linear velocity of the snake robot stays negative in the goal direction for over $360$ time steps (each time step is about $20$ \text{ms}), the goal-reaching task is also judged as a failure and trigger the re-sampling of the new task.

\begin{figure}[h!]
	\centering
	\includegraphics[width=0.99\columnwidth]{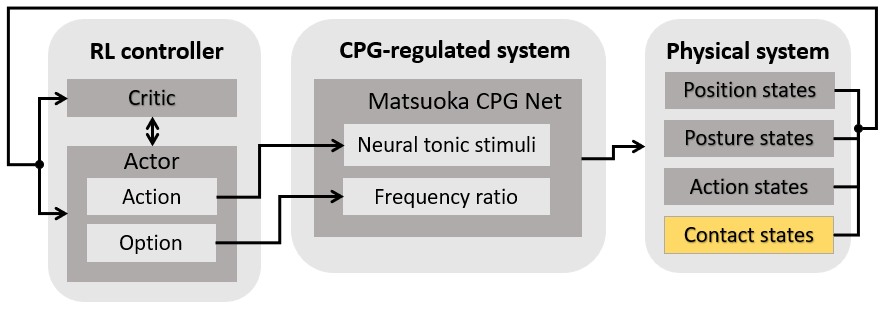}
	\caption{Flow chart of C1$+$ method. Different from C1, C1$+$ has contact information in its observation states, and is further trained in the obstacle-based environment.}
	\label{fig:c1+scheme}
\end{figure}

\begin{figure}[h!]
	\centering
	\includegraphics[width=0.8\columnwidth]{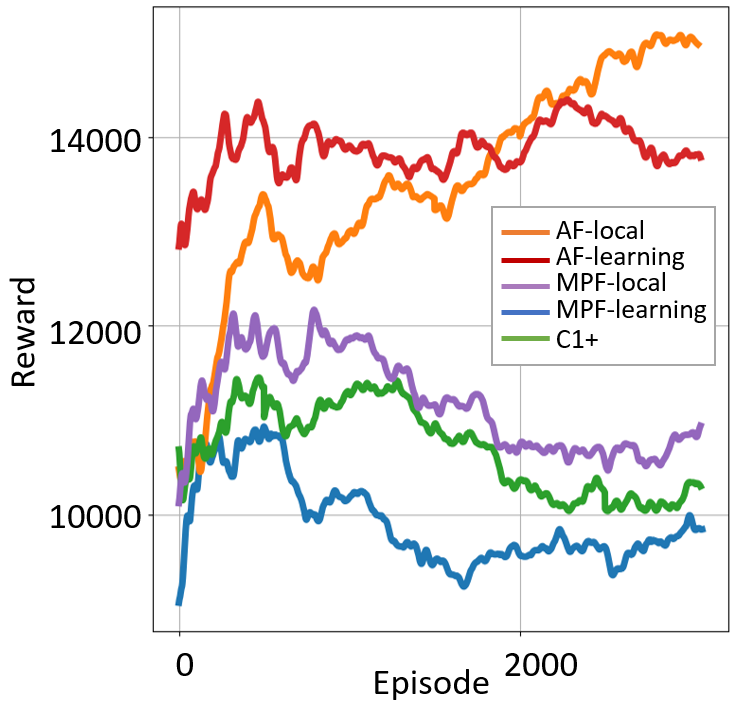}
	\caption{Learning process and evaluation scores comparison of the proposed method recorded in an obstacle-based training environment.}
	\label{fig:compare_result}
\end{figure}

\noindent \textbf{Training Score Comparison:} According to the above task specification, we train the AF-learning, MPF-learning, and C1$+$ methods, and compare their training scores with the evaluation scores of AF-local, MPF-local in the same environment. It is noted that C1$+$ is the contact-aware version of C1 (see Fig.~\ref{fig:c1+scheme}), which directly operates the tonic inputs of the CPG network given the full observation $\zeta_{1:19}$ from the environment. The C1 controller of all AF and MPF methods has been pre-trained in the obstacle-free environment and converged for goal-reaching tasks at the same level. The R2 controllers of AF-learning and MPF-learning methods are then trained in the obstacle-based goal-tracking tasks for $3000$ episodes till convergence, during which the NN parameters of their C1 controllers are fixed. The C1$+$ controller is first trained in the contact-free environment, then transferred to the obstacle-based environment, and is also trained for $3000$ episodes.

From Fig.~\ref{fig:compare_result}, it is observed that the AF-learning method reaches the highest reward and is the only learning method that keeps improving during the learning process. Among the remaining methods, AF-local is the only method with an average reward close to the AF-learning method. This result already shows the advantage of AF related method over the others. It is also noted that although the MPF-local method is evaluated slightly better than the C1$+$ method, the MPF-learning method cannot improve and converge to a higher score than its initial performance and ends up converging to the lowest reward level. According to  Remark~\ref{re:AFvsMPFvariable}, the bad performance of MPF-learning and MPF-local method is possibly due to the influence of $\Dot{p}_i^e, \Dot{p}_i^f$ in the MPF form Matsuoka oscillator, which makes the R2 RL controller more difficult to operate the sensory feedback signals of the CPG system.

\subsection{Performance analysis in real robot experiments}

In this section, we compare the performance of all five methods (mentioned in Section~\ref{sec:simtrain}) in contact-aware soft snake robot locomotion tasks in the real world. Furthermore, we test the performance of the top two methods in more challenging obstacle-based environments.

\subsubsection{Escaping experiment}

\begin{figure}[h!]
    \centering
    \includegraphics[width=0.8\columnwidth]{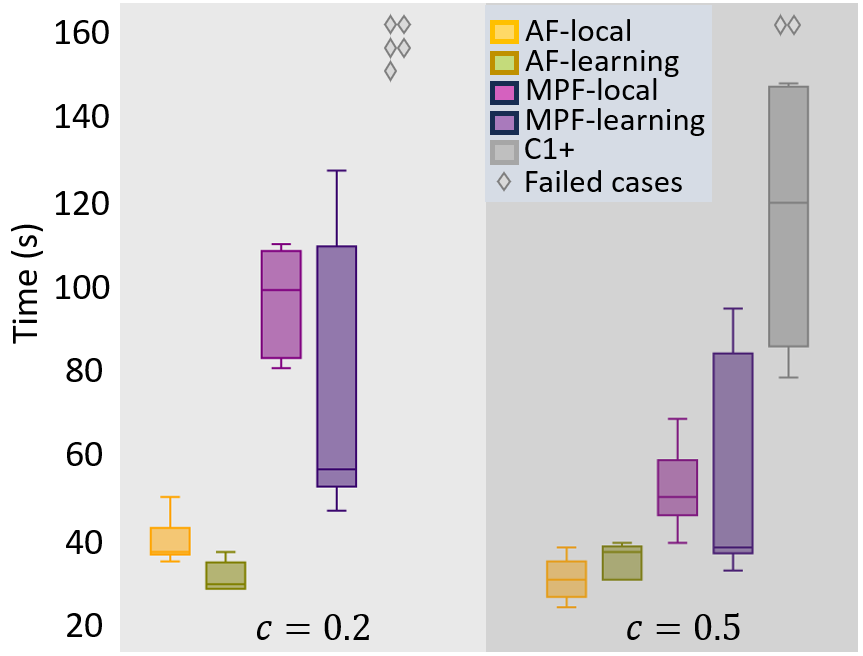}
    \caption{Statistics of escaping time of the proposed methods and the baseline.}
    \label{fig:snakeperformancecompare}
\end{figure} 

\begin{figure*}[h!]
    \centering
    \includegraphics[width=0.8\textwidth]{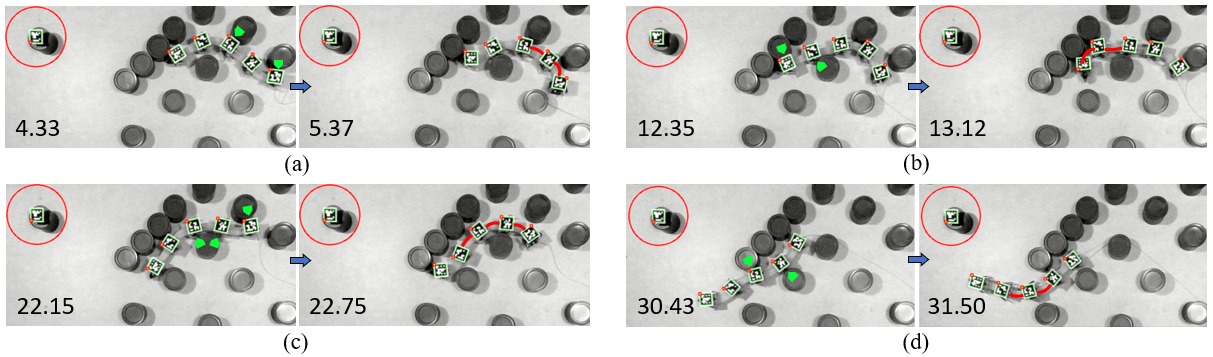}
    \caption{Sample screenshots of performance of the AF-local method in a goal oriented escaping task from the obstacles. Each pair of pictures shows the local reactive behavior of the soft snake robot before and after contacts.}
    \label{fig:videoscreenshot}
\end{figure*}

\begin{figure*}[h!]
    \centering
    \includegraphics[width=0.85\textwidth]{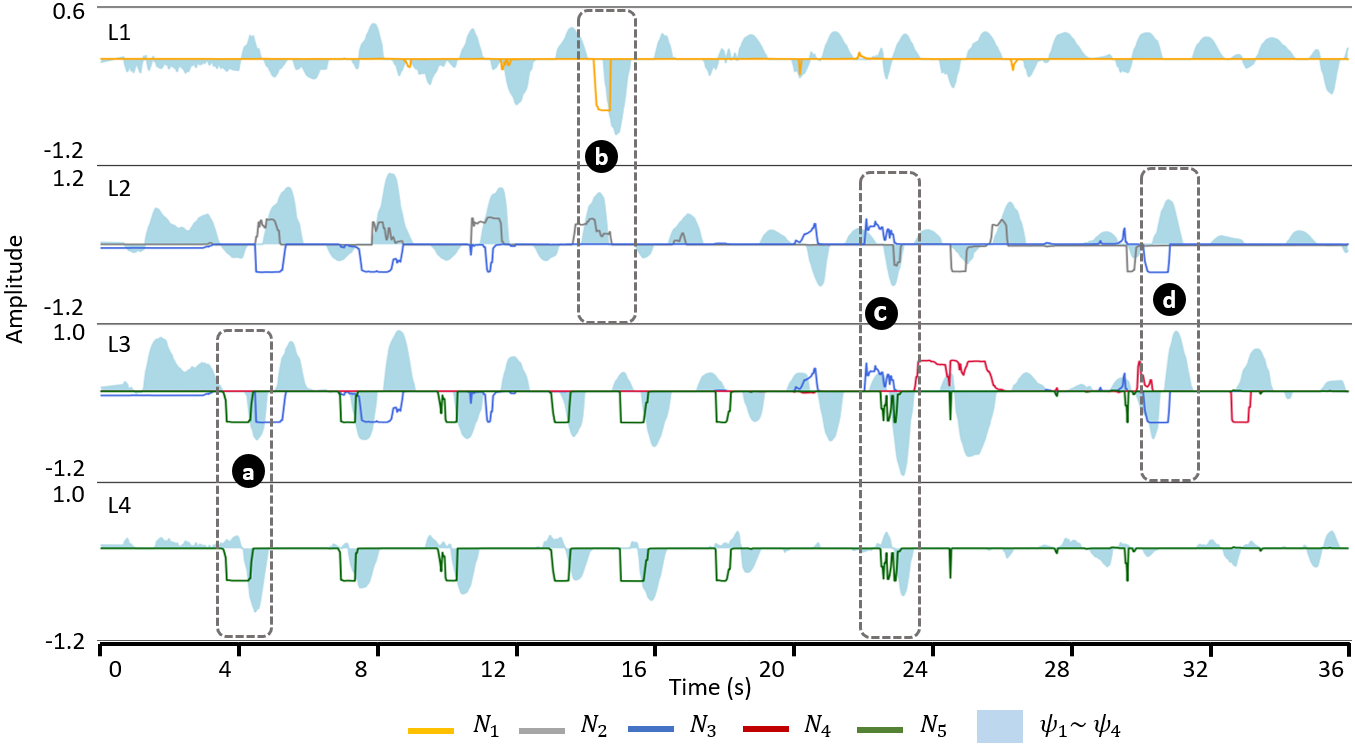}
    \caption{Recorded sensory input and CPG output of each body link of the soft snake robot controlled by the AF-local method in the goal-oriented escaping task.}
    \label{fig:escapingAFlocal}
\end{figure*} 

\begin{figure*}[h!]
    \centering
    \includegraphics[width=0.85\textwidth]{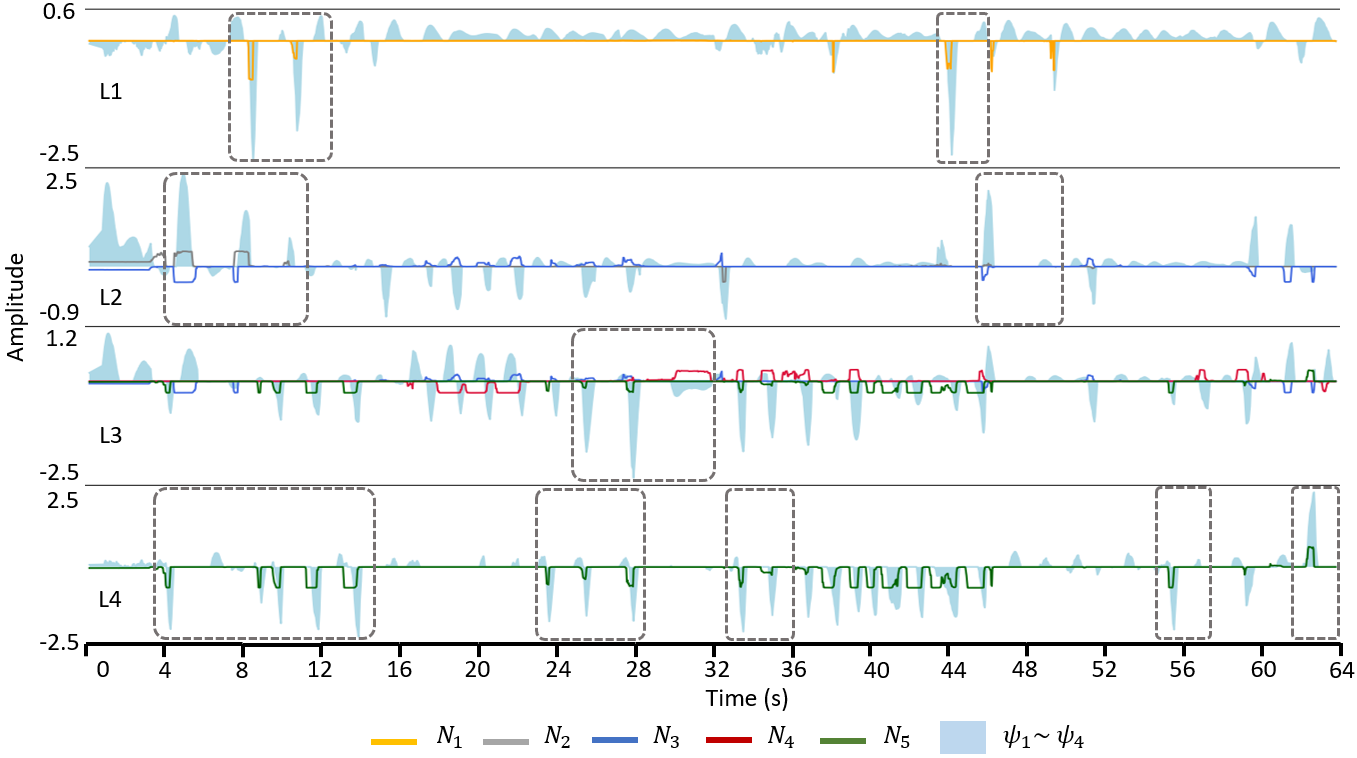}
    \caption{Recorded sensory input and CPG output of each body link of the soft snake robot controlled by the MPF-local method in the goal-oriented escaping task.}
    \label{fig:escapingMPFlocal}
\end{figure*}

In the real-world contact-aware locomotion scenario, we design an escaping task to distinguish the strength and weakness of the contact-aware controllers (listed in Section~\ref{sec:simtrain}). 

\noindent \textbf{Environment settings: }The escaping task is designed for the following principles:
\begin{itemize}
    \item The allocation of the obstacles should create a narrow passage for the snake robot, with more contact opportunities and a sharper tuning angle to test the overall capability of the controllers in escaping the jamming situations. In addition, the narrow space also limits the amplitude for regular body oscillation of the snake robot. 
    \item The obstacles should be allocated to obstruct the goal-reaching behavior. This is to test the coordination of the goal-reaching module (C1 controller) and contact reactive module (local reflexive or R2 method) in the compared controllers. 
    \item The allocation of the obstacles should include the situation where only the latter half links of the robot are stuck in the obstacles. This is for telling whether the controller relies mostly on its head steering to escape from the obstacles.
    \item The obstacles should be placed more densely in reality to test the generality of the compared controllers.
\end{itemize}

Based on the above principles, the obstacles in the escaping task are allocated as shown in Fig.~\ref{fig:videoscreenshot}. In the escaping task, the distance between every two obstacles ranged from $85$ mm to $150$ mm. The robot is initially bending to its left, and placed at a position where $4$ rigid bodies are in contact with the obstacles from different sides. The exit direction (left) of the obstacle region is intentionally set opposite to the goal direction (right). The distance between the exit of the obstacle region and the goal is $540$ mm, which is close to the length of the snake robot.

\noindent \textbf{Performance statistics: }According to the free oscillation tonic input property of coefficient $c$ in \cite[Appendix B-D]{xliu2023}, as the value of $c$ increases, it can increase the oscillation amplitude of the outputs of FOC-PPOC-CPG controller and therefore improve its sim-to-real adaptability in the locomotion tasks. However, the value of $c$ should not be larger since a higher free oscillation tonic input could decrease the goal-tracking accuracy. As a result, we separate the experiment into two groups with $c=0.2$ and $c=0.5$ respectively. For each value of $c$, we run five trials for each control method.\footnote{Performance videos for each method in the escaping task with different $c$ values are available at: \url{https://shorturl.at/huBR1}.} 

We record and compare the finishing time of the escaping task of each controller. As shown in Fig.~\ref{fig:snakeperformancecompare}, AF-local and AF-learning methods outperform the other methods in the escaping task in both speed and stability. The increase of $c$ from $0.2$ to $0.5$ does not significantly improve the performance of both AF methods. The main reason is that the sim-to-real adaptability of the AF methods is already good. MPF-learning method's average finishing time is shorter than MPF-local when $c=0.2$, but is less stable than MPF-local, with the task finishing time varying from 42 seconds to 120 seconds. When $c=0.5$, MPF-local method outperforms MPF-learning in both speed and stability. C1$+$ method cannot reach the goal in every trial when $c=0.2$. However, with the increase of $c$ to $0.5$, the adaptability of the C1$+$ controller is also improved so that it succeeds in a few of the trials. It is noted that, although MPF-learning converges to a lower reward level than C1$+$ method during the learning process (Fig.~\ref{fig:compare_result}), its adaptability to the harder unseen task (in sim-to-real) is better than C1$+$ method.  Generally, the results in Fig.~\ref{fig:snakeperformancecompare} further verify the advantages of AF feedback Matsuoka oscillator predicted by Remark~\ref{re:AFvsMPFvariable} and Proposition~\ref{prop:matsuokaduty}.

\begin{figure*}[ht!]
\centering
\includegraphics[width=0.85\textwidth]{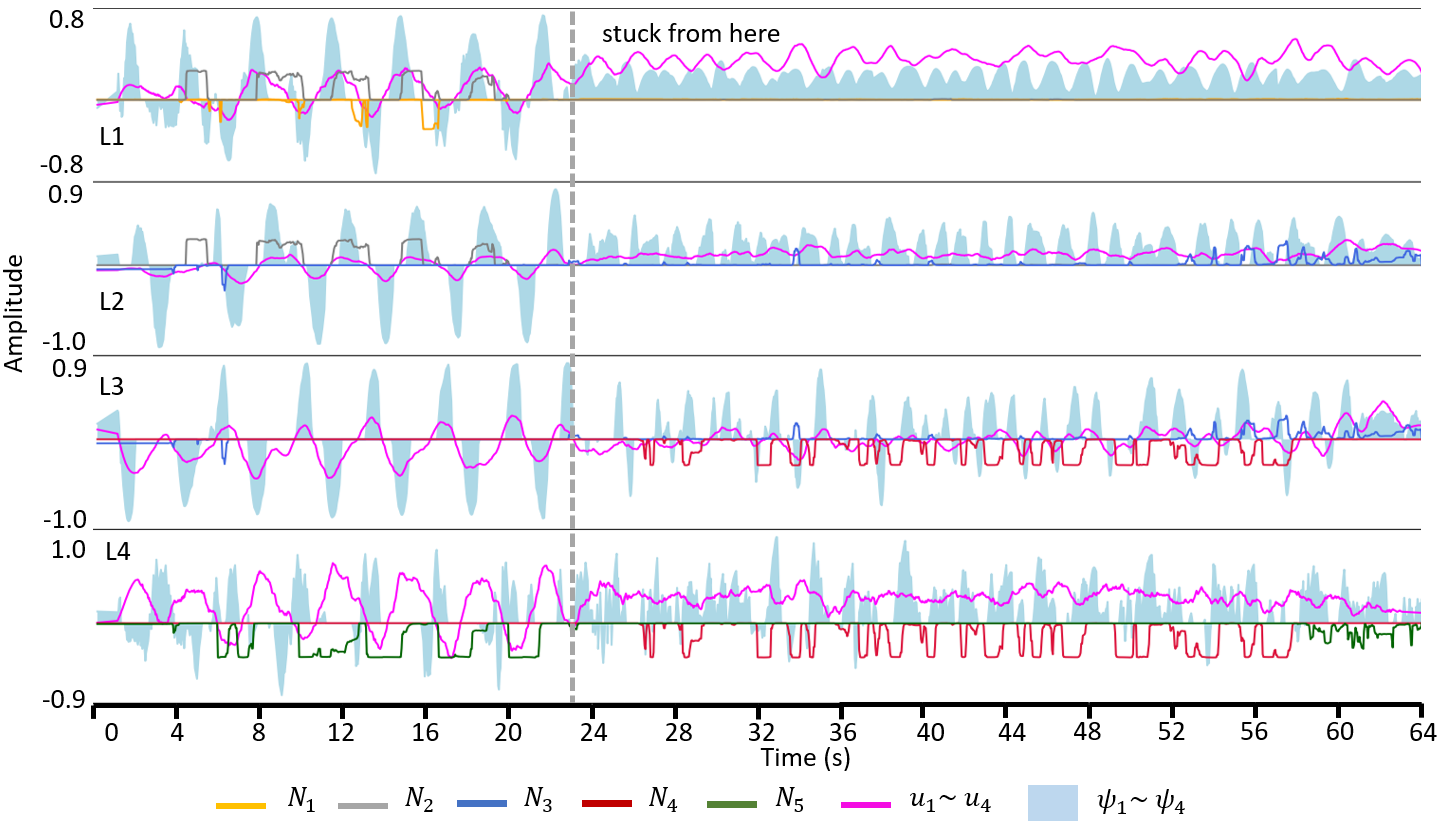}
\caption{Recorded sensor feedback control signals, tonic input signals and CPG outputs of the soft snake robot controlled by C1$+$ method in the goal-oriented escaping task.}
\label{fig:c1+}
\end{figure*}

\begin{figure*}[ht!]
\centering
\includegraphics[width=0.85\textwidth]{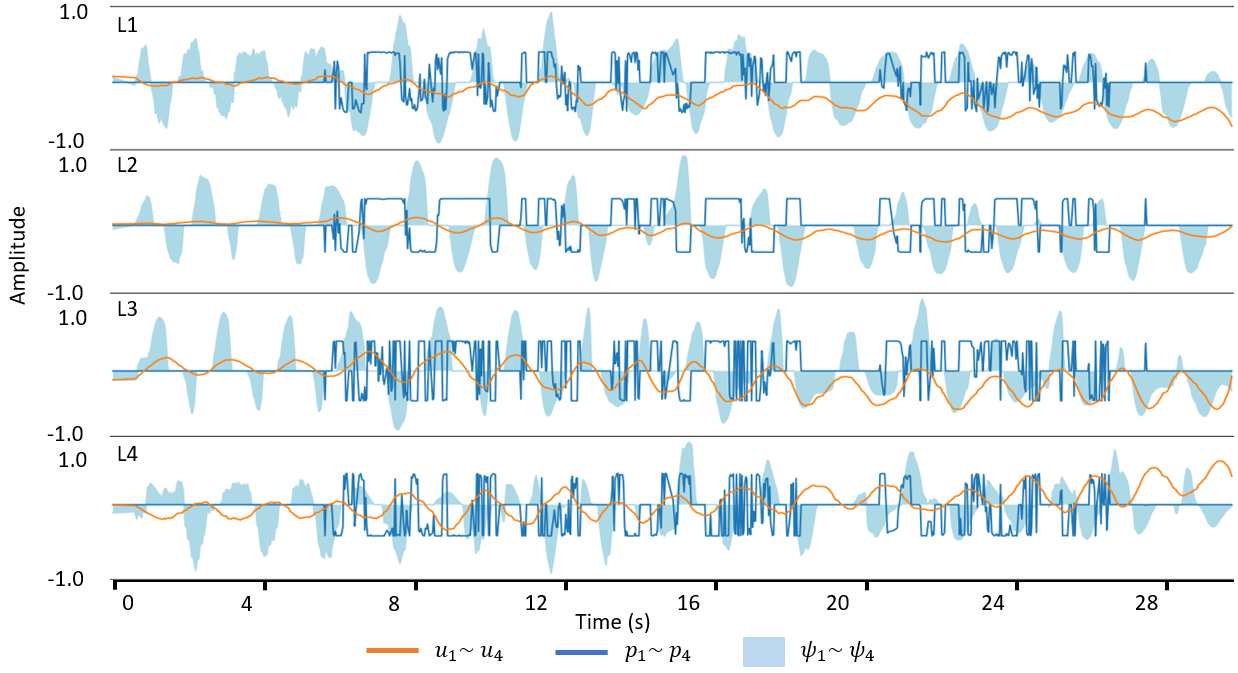}
\caption{Recorded sensor feedback control signals, tonic input signals and CPG outputs of the soft snake robot controlled by AF-learning method in the goal-oriented escaping task.}
\label{fig:plearning}
\end{figure*}

\begin{figure*}[ht!]
\centering
\includegraphics[width=0.85\textwidth]{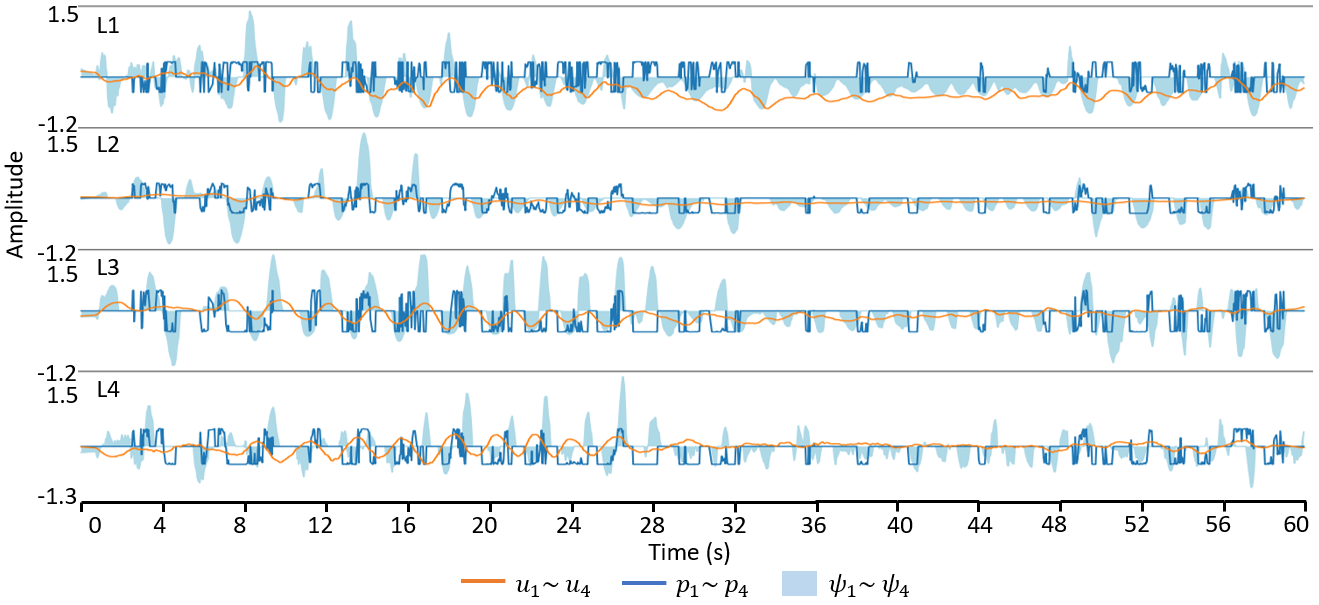}

\caption{Recorded sensor feedback control signals, tonic input signals and CPG outputs of the soft snake robot controlled by MPF-learning method in the goal-oriented escaping task.}
\label{fig:flearning}
\end{figure*}

\begin{figure*}[h!]
\centering
\includegraphics[width=0.8\textwidth]{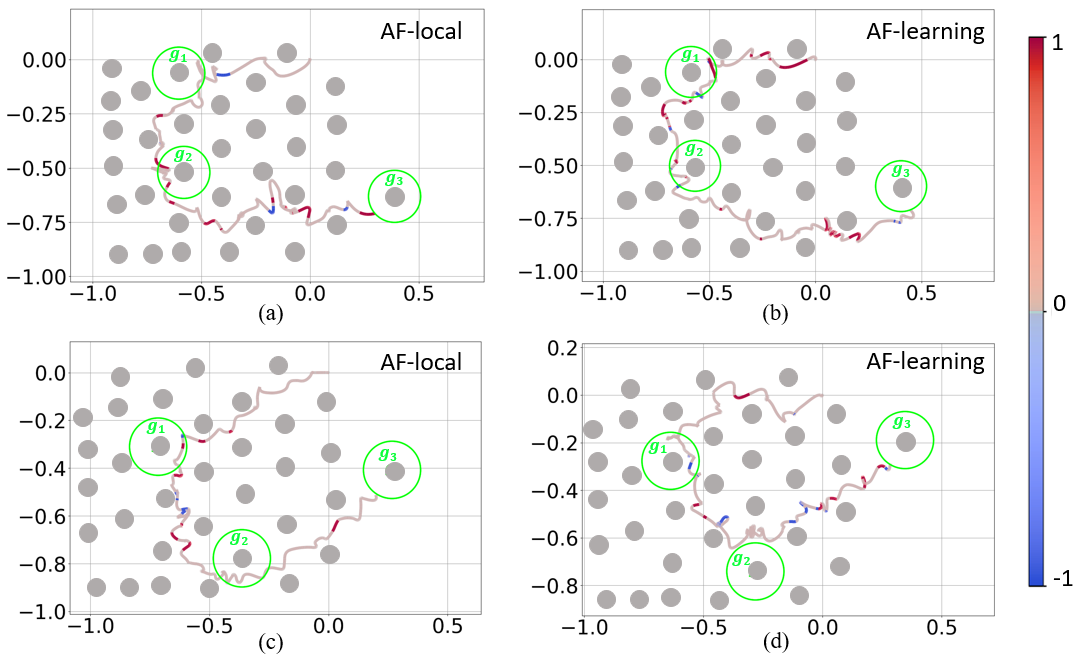}
\caption{Sample way-point trajectories followed by (a) AF-local controller in square trajectory, (b) AF-learning controller in square trajectory, (c) AF-local controller in triangle trajectory and (d) AF-learning controller in triangle trajectory. The distribution of reactive signals along the trajectory to the CPG-controlled actuators from head joint of the robot are visualized.}
\label{fig:diffshowcase}
\end{figure*}

\noindent \textbf{Case analysis: }
We can further compare the sample trajectories of contact feedback signals and control commands for different control methods to analyze the special features of AF-local and AF-learning methods (Fig.~\ref{fig:escapingAFlocal}, Fig.~\ref{fig:escapingMPFlocal}, Fig.~\ref{fig:flearning}, Fig.~\ref{fig:plearning} and Fig.~\ref{fig:c1+}). It is noted that in the joint space figures, the positive and negative values are related to the extensor and flexor of the CPG system, as well as the left and right of the snake body respectively.

First, we investigate the trajectory sample of the AF-local method in the escaping task on the basis of AF-local mechanisms illustrated by Fig.~\ref{fig:modifiedmatsuokareflexivemethod}. As shown in Fig.~\ref{fig:escapingAFlocal}, we highlight four time intervals of the trajectory that present typical local reflexive control in the AF-local controller (the robot's body postures before and after contacts at intervals (a)$\sim$(d) are captured by Fig.~\ref{fig:videoscreenshot}a$\sim$Fig.~\ref{fig:videoscreenshot}d). Here we select time intervals (a) and (c) for discussion. At time interval (a) of Fig.~\ref{fig:escapingAFlocal}, both CPG nodes at L3 and L4 are first influenced by the contact from the $N_5^f$ ($N_5 < 0$), so the flexors of CPG nodes in L3 and L4 are activated to open the right valves of L3 and L4, which results in both links bend to the left in Fig.~\ref{fig:videoscreenshot}a. Then L3's CPG output is influenced by $N_3^f$, which will deactivate L3's CPG flexor and activate L3's CPG extensor. At time interval (c) of Fig.~\ref{fig:escapingAFlocal}, the CPG node at L2 is influenced by the superposition of $N_3^e$ and $N_2^f$, and is supposed to activate its flexor to open the right valve of L2, which results in L2 bend leftward in Fig.~\ref{fig:videoscreenshot}c. Due to the whole snake robot's tendency to turn right toward the target position, the amplitude of L2's CPG output signal is smaller than expected. The CPG node at L3 is influenced by the superposition of $N_3^e$ and $N_5^f$, which also causes L3's flexor activated to bend to the left side. The CPG node at L4 is influenced by $N_5^f$, which activates L4's flexor and bends the L4 soft chamber to the left. From the above behavior of the CPG outputs, we can verify that the experiment results match the local reflexive mechanism illustrated in Fig.~\ref{fig:modifiedmatsuokareflexivemethod}.





Similarly, from the sampled trajectories of the MPF-local method in Fig.~\ref{fig:escapingMPFlocal}, we can conclude that the sensory inputs and the CPG outputs for all body links satisfy the local reflexive mechanism determined by Fig.~\ref{fig:modifiedmatsuokareflexivemethod}. However, when comparing Fig.~\ref{fig:escapingMPFlocal} to the AF-local behavior in Fig.~\ref{fig:escapingAFlocal}, the MPF-local controller produces significantly large overshoots even when the contact signals are small. The recovery delay is also more frequently observed in the trajectories of the MPF-method, such that the MPF-local controller always takes a longer time to recover to its goal-reaching oscillation after the contact signals disappear. These observations further verify Remark~\ref{re:AFvsMPFvariable} and its derivations, that the first order derivative term $\Dot{p_i}$ will seriously interfere with the control of MPF form CPG system when the contact feedback signals are densely emerging, and therefore hinder the performance of contact-aware locomotion.\footnote{The locomotion performance of the MPF-local method can be observed in videos \url{https://shorturl.at/fAGJN} and \url{https://shorturl.at/AORW9}.} 

The issue of the output wave response can also be observed in MPF-learning (Fig.~\ref{fig:flearning}). With more chaotic sensory feedback signals from the RL event-based controller R2, the CPG outputs also show disturbed behaviors, which significantly slow down the locomotion in the escaping task. It is worth noting that, due to the black-box property of the learning-based method, both AF-learning and MPF-learning methods send more complex sensory feedback signals to their CPG systems. However, we can still observe clear and coordinated oscillation in the sample of AF-learning trajectory in Fig.~\ref{fig:plearning}. This is also because AF series methods are free from the disturbances of the $\Dot{p_i}$ term.






As shown in Fig.~\ref{fig:c1+}, the C1$+$ method fails to learn to react to the sensory inputs. When the target moving direction of the robot is blocked by the obstacles, the C1$+$ controller cannot pull the soft snake robot out from the jam and skirt the obstacles.  

In conclusion, the results and analyses in the escaping tasks show strong evidence of the advantage of AF-local and AF-learning controllers in the contact-aware locomotion of soft snake robots.

\subsubsection{General performance of AF series methods in difficult contact-aware locomotion tasks}

In this section, we throw the two methods with the best performance in the escaping task to a more complicated environment with multiple targets to traverse in a dense obstacle array, with a lot of detours.

Due to the randomness of contacts in such complicated goal-tracking tasks in the obstacle maze, it is harder for any methods to concentrate around certain trajectories. Without pre-planning the path, it is possible for the same control method to traverse the target goals in different paths. We show fast and slow paths for each task (square and triangle) in video ``square.mp4" and ``triangle.mp4" \footnote{The videos are available at \url{https://shorturl.at/huBR1}}. 

In Fig.~\ref{fig:diffshowcase}, we plot the recorded paths of AF-local and AF-learning methods traversing three targets allocated in square shape and triangle shape respectively in 2 dimension space. The color map on the paths shows the reactive commands of the L1 actuator sent by both control methods. Both methods have achieved decent performance in the harder tasks.

\section{Conclusion}
\label{sec:conclusion}
This paper establishes a novel framework for the contact-aware intelligent locomotion control of a soft snake robot. This framework is an organic integration of hardware design, feedback mechanism study through a bio-inspired CPG system, and 
implementation of sensory feedback control schemes. The proposed approaches are able to achieve promising performance in both simulation and real robots in several contact-aware locomotion tasks with densely allocated obstacles. Our novel method tackles jointly contact sensing and contact reacting controls in the contact-aware locomotion control of the soft snake robot. Our work brings inspiration for both the distributed reflexive method and learning-based control method and forms the basis to design and control of soft snake robots that can pass through environments with unpredictable and dense obstacles.

For future study, the contact module and design can be enhanced with the consideration of more advanced materials and structures to improve contact sensitivity and locomotion efficiency for more challenging environments (e.g. underwater contact or uneven and compliant terrains). The tactile information and the locomotion gait can also be enriched by increasing the number of body links of the snake robot. More investigation is also needed to understand the influence of couplings among primitive AF form feedback Matsuoka oscillators in the CPG network so that the variation of couplings can be utilized to improve the performance of the contact-aware locomotion controller. In addition, one limitation is that the proposed learning-based controller is mainly reactive and may not learn to leverage obstacles to aid the locomotion without trajectory planning. Such behavior may be achieved by combining depth visual information and tactile information of the obstacles to the deep reinforcement learning controller in the PPOC-CPG framework.

\appendices
\numberwithin{equation}{section}

\section{Data}
\label{app:data}
This section includes the parameter configuration of the Matsuoka CPG network and the hyper parameter setting of domain randomization for the experiment.

\begin{table}[h]
    \centering
    \caption{Parameter Configuration of the Matsuoka CPG Net Controller for the Soft Snake Robot.}
    \label{tab:config}
    \scalebox{1}{
    \begin{tabular}{p{3cm}|p{2cm}|p{2cm}}
     \hline 
     \textbf{Parameters} & \textbf{Symbols} & \textbf{Values} \\  \hline
        Amplitude ratio & $a_{\psi}$ & 2.0935\\ 
        $*$Self-inhibition weight & $b$ & \textbf{10.0355} \\
        $*$Discharge rate & $\tau_r$ & \textbf{0.7696} \\
        $*$Adaptation rate & $\tau_a$ & \textbf{1.7728} \\
        Period ratio & $K_f$ & 1.0\\[1ex] 
        \hline
        Mutual inhibition weights 
         & $a_i$ & 4.6062 \\[1ex]
        \hline
        Coupling weights 
         & $w_{ij}$ & 8.8669 \\
         & $w_{ji}$ & 0.7844 \\ [1ex]
     \hline
    \end{tabular}
}\end{table}


\begin{table}[h]
    \centering
    \caption{Domain randomization parameters}
    \scalebox{1}{
    \begin{tabular}{c|c|c}
        \hline
        \textbf{Parameter} & \textbf{Low} & \textbf{High} \\
        \hline
        Ground friction coefficient & 0.1 & 1.5 \\
        Wheel friction coefficient& 0.05 & 0.10 \\
        Rigid body mass ($kg$)& 0.035 & 0.075 \\
        Tail mass ($kg$)& 0.065 & 0.085 \\
        Head mass ($kg$) & 0.075 & 0.125 \\
        Max link pressure ($psi$) & 5 & 12\\
        Gravity angle ($rad$) & -0.001 & 0.001\\
        \hline
    \end{tabular}}
    \label{tab:dr}
\end{table}





\section{Proof of Proposition ~\ref{prop:matsuokaduty}}
\label{app:theory}

\begin{proof}
    For simplicity we denote $A_i^q\triangleq A_{x_i^q}$ and $r_i^q\triangleq r_{x_i^q}$ for $q\in\{e, f\}$. Instead of looking into the relation between $u_i$ and $z_i$, we focus on the bias between the two states. 

According to the \textit{perfect entrainment assumption} \cite{matsuoka2011analysis} and \cite[Definition 1]{xliu2023}, let $u_i$ be resonant to $z_i$. We define the duty cycle of a wave function as $D(\cdot)$. Let the period of $z_i$ be $T = 2\pi$ (a different value of $T$ would not affect the result of the calculation), based on the Fourier expansion, the bias of $u_i$ can be expressed as
\begin{align}
\label{eq: biasu}
    \mbox{bias}(u_i) &= \frac{1}{T}\int_{-T/2}^{T/2} u_i(t) dt \\ \nonumber
    &= \frac{1}{2\pi}\int_{-\pi}^{\pi} u_i(t) dt\\ \nonumber
    &= 2 \frac{1}{2\pi}\int_{-\pi}^{\pi} u_i^e(t) dt - 1 = 2 D(u_i^e) - 1.
\end{align}

Because the bias terms of $x_i$ and $u_i$ are time-invariant, we can extract the bias component to form a new equation as follows
\begin{align}
\label{eq:biasraw}
\mbox{bias}(x_i) &= a\cdot \mbox{bias}(z_i) - b\cdot \mbox{bias}(y_i) + \mbox{bias}(u_i)\\ \nonumber
\mbox{bias}(y_i) &= \mbox{bias}(z_i) - \mbox{bias}(p_i).
\end{align}

Assume $x_i$ can be approximated by its main sinusoidal component and the period of both $x_i$ and $z_i$ is represented by $T$, we have 
\begin{align*}
    &\mbox{bias}(x_i) = \frac{1}{T}\int_{-T/2}^{T/2} x_i dt = \frac{1}{T}\int_{-T/2}^{T/2} (x_i^e -x_i^f) dt\\
    &\quad = \frac{1}{T}\int_{-T/2}^{T/2} A_i^e(\cos(\omega t) + r_i^e) - A_i^f(\cos(\omega t) + r_i^f) dt \\
    &\quad = A_i^e r_i^e - A_i^f r_i^f,
\end{align*}
and
\begin{align*}
    &\mbox{bias}(z_i) = \frac{1}{T}\int_{-T/2}^{T/2} z_i dt = \frac{1}{T}\int_{-T/2}^{T/2} (z_i^e - z_i^f) dt\\
    &\quad = \frac{1}{T}\int_{-T/2}^{T/2} (A_i^e(K(r_i^e)cos(\omega t) + L(r_i^e)) \\
    &\qquad \qquad - A_i^f(K(r_i^f)\cos(\omega t) + L(r_i^f))) dt \\
    &\quad= A_i^e (L(r_i^e) - \frac{1}{\pi}) - A_i^f (L(r_i^f) - \frac{1}{\pi}) + \frac{1}{\pi} (A_i^e - A_i^f).
\end{align*}
Apply Taylor expansion on $L(r)$ at $r=0$, we have
\[
    L(r) = \frac{1}{\pi} + \frac{r}{2} + o(r), r\in(-1, 1).
\]
Then we have
\begin{align}
\label{eq:bzbx}
    \mbox{bias}(z_i)&= \frac{1}{2}A_i^e r_i^e - \frac{1}{2}A_i^f r_i^f +\frac{1}{\pi}(A_i^e-A_i^f)\\ \nonumber
    &= \frac{1}{2}\mbox{bias}(x_i) +\frac{1}{\pi}(A_i^e-A_i^f).
\end{align}

According to \cite{matsuoka2011analysis}, the amplitude $A_i^q$ (for $q\in\{e,f\}$) has the form
\[
    A_i^q = \frac{\mbox{bias}(u_i^q)+c}{r_i^q+(a+b)L(r_i^q)}.
\]
When the system is harmonic, according to \cite[(30)]{matsuoka2011analysis}, we have
\[
    r_i^e = r_i^f = K^{-1}(K_n),
\]
such that
\begin{align}
    A_i^e - A_i^f &= \frac{\mbox{bias}(u_i^e)-\mbox{bias}(u_i^f)}{ K^{-1}(K_n)+(a+b)L( K^{-1}(K_n))}\\
    &\approx \frac{\mbox{bias}(u_i)}{ 2K_n-1+\frac{2}{\pi}(a+b)\sin^{-1}(K_n)}.
\end{align}

Let $m = \frac{1}{\pi}\frac{1}{2K_n-1+\frac{2}{\pi}(a+b)\sin^{-1}(K_n)}$, \eqref{eq:bzbx} can be rewritten as
\begin{align}
\label{eq:bzbxbu}
    \mbox{bias}(z_i) = \frac{1}{2}\mbox{bias}(x_i) + m\mbox{bias}(u_i).
\end{align}
Substitute $\mbox{bias}(x_i)$ in \eqref{eq:biasraw} with \eqref{eq:bzbxbu}, we can obtain the pure relation between $\mbox{bias}(z_i)$ and $\mbox{bias}(u_i)$, $\mbox{bias}(p_i)$ as

\begin{align*}
    \mbox{bias}(z_i) = \frac{1+2m}{b-a+2}\mbox{bias}(u_i)+ \frac{b}{b-a+2}\mbox{bias}(p_i).
\end{align*}
\end{proof}

\section{The Reason of Using Inhibiting Sensory Feedback Input in the AF form Matsuoka Oscillator}
\label{app:remark}

In the AF form Matsuoka oscillator, if the sensory feedback coefficients $p_i^e, p_i^f$ are activating, then by changing the sign of $p_i$ term in Eq.~\eqref{eq:odematsuokaP}, we have
\begin{align}
\label{eq:wrongodematsuokaP}
    &\tau_r \tau_a \frac{d^2}{dt^2} x_{{\cal_F}_i} + (\tau_r + \tau_a - \tau_a a K(r_x)) \frac{d}{dt} x_{{\cal_F}_i} \\ \nonumber
    &+ ((b-a) K(r_x) + 1)x_{{\cal_F}_i} = \tau_a\frac{d}{dt}u_{{\cal_F}_i} + u_{{\cal_F}_i} - b p_i.
\end{align}
Let $m_1 = \tau_r \tau_a$, $m_2 = \tau_r + \tau_a - \tau_a a K(r_x)$ and $m_3 = (b-a) K(r_x) + 1$, $\omega_0^2 = \frac{m_3}{m_1}$. We have Eq.~\eqref{eq:wrongodematsuokaP} simplified to
\begin{align}
    m_1 \frac{d^2}{dt^2} x_{{\cal_F}_i} + m_2 \frac{d}{dt} x_{{\cal_F}_i} + m_3 x_{{\cal_F}_i} = \tau_a\frac{d}{dt}u_{{\cal_F}_i} + u_{{\cal_F}_i} - b p_i.
\end{align}
According to \cite[Eq. (B.15), Eq. (B.16)]{xliu2023}, 
\[
    u_{{\cal_F}_i} \approx A_u \cos{(\omega t)} - \frac{1}{2}, \text{ where } A_u = \frac{1}{2}\frac{e^A - 1}{e^A + 1}.
\]
Therefore we have
\begin{align}
\label{eq:wrongodematsuokaPsimp}
    & m_1 \frac{d^2}{dt^2} x_{{\cal_F}_i} + m_2 \frac{d}{dt} x_{{\cal_F}_i} + m_3 x_{{\cal_F}_i} \\ \nonumber
    &= -\tau_a \omega A_u \sin{(\omega t)} 
    + A_u \cos{(\omega t)} - \frac{1}{2} - b p_i. 
\end{align}

During a contact segment, assume $p_i \approx 1$ and $p_i$ is a constant in this segment (e.g. the first half period of a square wave). We consider the solutions of Eq.~\eqref{eq:wrongodematsuokaPsimp} in the following two scenarios:
\begin{itemize}
    \item When $K(r_x) < \frac{\tau_r + \tau_a}{\tau_a a}$, the particular solution is
    \begin{align}
        x^*(t) &= -\frac{\tau_a \omega A_u}{\sqrt{m_1^2(\omega_0^2-\omega^2)^2 + m_2^2\omega^2}} \cos{(\omega t + \theta_1)} \\ \nonumber
        &+ \frac{A_u}{\sqrt{m_1^2(\omega_0^2-\omega^2)^2 + m_2^2\omega^2}}  \cos{(\omega t + \theta_2)} \\ \nonumber
        &-\frac{\frac{1}{2} + b}{m_3},
    \end{align}
    and the general solution is 
    \begin{align*}
        x(t) = c_1 \exp{(\lambda_1 t)} + c_2 \exp{(\lambda_2 t)} + x^*(t) \approx x^*(t),
    \end{align*}
    where $\lambda_1, \lambda_2 < 0$ are the eigenvalues of Eq.~\eqref{eq:wrongodematsuokaPsimp}.
    
    \item When $K(r_x) = K_n = \frac{\tau_r + \tau_a}{\tau_a a}$ and $\omega \neq \omega_0$, the system becomes harmonic. The particular solution is
    \begin{align}
        x^*(t) &= -\frac{\tau_a \omega A_u}{m_1(\omega_0^2 - \omega^2)} \sin{(\omega t)} \\ \nonumber
        &+ \frac{A_u}{m_1(\omega_0^2 - \omega^2)}  \cos{(\omega t)} \\ \nonumber
        &-\frac{\frac{1}{2} + b}{m_3},
    \end{align}    
    and the general solution is 
    \begin{align*}
        x(t) = c_1 \cos{(\omega_0 t)} + c_2 \sin{(\omega_0 t)} + x^*(t),
    \end{align*}
    where the parameters $c_1, c_2$ are related to the initial condition of Eq.~\eqref{eq:wrongodematsuokaPsimp}. 
\end{itemize}

When the snake robot is jammed by the obstacles, the oscillation frequency $\omega$ will decrease, and therefore leads to the increase of $\omega_0^2 - \omega^2$. In this case, as long as $\omega$ is small enough, in both harmonic and non-harmonic situations the solution of Eq.~\eqref{eq:wrongodematsuokaPsimp} will be consistently negative. In the Matsuoka oscillator, a consistently negative membrane potential will cause $z_i = 0$. Then the output of the Matsuoka oscillator becomes zero. Therefore the CPG node in contact will always stop oscillating for a certain period of time, which extend the jamming period and harm the locomotion performance in contact-aware scenario.

\bibliographystyle{IEEEtran}
\bibliography{refs.bib}
\vspace{4em}


\end{document}